\algnewcommand\algorithmicto{\textbf{to}}
\newtheorem{theorem}{Theorem}
\begin{document}
%
\title{MSDformer: Multi-scale Discrete Transformer For Time Series Generation}
%
%
%
%

\author{Shibo Feng, Zhicheng Chen, Xi Xiao$^\dagger$, Zhong Zhang, \\ Qing Li,~\IEEEmembership{Senior Member, IEEE,}, Xingyu Gao$^\dagger$, Peilin Zhao$^\dagger$~\IEEEmembership{Senior Member, IEEE}
\IEEEcompsocitemizethanks{
\IEEEcompsocthanksitem Shibo Feng is with Nanyang Technological University, Singapore. Email: shibo001@e.ntu.edu.sg.
\IEEEcompsocthanksitem Zhicheng Chen and Xi Xiao are with Shenzhen International Graduate School, Tsinghua University. Email: czc22@mails.tsinghua.edu.cn, xiaox@sz.tsinghua.edu.cn.
\IEEEcompsocthanksitem Zhong Zhang is with Tencent, Shenzhen. Email: todzhang@tencent.com.
\IEEEcompsocthanksitem Qing Li is with Peng Cheng Laboratory, Shenzhen. Email: andyliqing@gmail.com.
\IEEEcompsocthanksitem Xingyu Gao is with the Institute of Microelectronics, Chinese Academy of Sciences, and also with the University of Chinese Academy of Sciences, Beijing, China. Email: gxy9910@gmail.com.
\IEEEcompsocthanksitem Peilin Zhao is with the School of Artificial Intelligence, Shanghai Jiao Tong University, Shanghai, China. Email: peilinzhao@sjtu.edu.cn
\IEEEcompsocthanksitem $\dagger$ Corresponding author.
}
}

%
%

\markboth{Journal of \LaTeX\ Class Files, 2026}%
{Shell \MakeLowercase{\textit{et al.}}: MSDformer: Multi-scale Discrete Transformer For Time Series Generation}
%



\IEEEtitleabstractindextext{%
\begin{abstract}
\justifying
Discrete Token Modeling (DTM), which employs vector quantization techniques, has demonstrated remarkable success in modeling non-natural language modalities, particularly in time series generation. While our prior work SDformer established the first DTM-based framework to achieve state-of-the-art performance in this domain, two critical limitations persist in existing DTM approaches: 1) their inability to capture multi-scale temporal patterns inherent to complex time series data, and 2) the absence of theoretical foundations to guide model optimization. To address these challenges, we proposes a novel multi-scale DTM-based time series generation method, called Multi-Scale Discrete Transformer (MSDformer). MSDformer employs a multi-scale time series tokenizer to learn discrete token representations at multiple scales, which jointly characterize the complex nature of time series data. Subsequently, MSDformer applies a multi-scale autoregressive token modeling technique to capture the multi-scale patterns of time series within the discrete latent space. Theoretically, we validate the effectiveness of the DTM method and the rationality of MSDformer through the rate-distortion theorem. Comprehensive experiments demonstrate that MSDformer significantly outperforms state-of-the-art methods. Both theoretical analysis and experimental results demonstrate that incorporating multi-scale information and modeling multi-scale patterns can substantially enhance the quality of generated time series in DTM-based approaches. Code is available at this repository: \url{https://github.com/kkking-kk/MSDformer}.
\end{abstract}

\begin{IEEEkeywords}
Time Series Generation, Multi-scale Modeling, Autoregressive Token Modeling, Vector Quantization.
\end{IEEEkeywords}}

\maketitle

\IEEEdisplaynontitleabstractindextext

%
\IEEEpeerreviewmaketitle

\IEEEraisesectionheading{
\section{Introduction}\label{sec:introduction}}

\IEEEPARstart{T}
{ime} series data is integral to numerous real-world applications, including finance \cite{liu2016online, ding2020hierarchical, yoo2021accurate, xu2021relation,10236709}, healthcare \cite{penfold2013use}, energy \cite{qiu2017short, lim2021time, feng2023multi, feng2024latent}, retail \cite{lim2021temporal, zhou2023one}, and climate science \cite{wu2020adversarial}. Despite its importance, the scarcity of dynamic data often hinders the advancement of machine learning solutions, especially in contexts where data sharing might compromise privacy \cite{alaa2020generative}. To address this challenge, the generation of synthetic yet realistic time series data has gained traction as a viable alternative. This approach has become increasingly popular, driven by recent progress in deep learning methodologies.

The deep learning revolution introduces three principal paradigms for temporal data synthesis:
Generative Adversarial Networks (GANs) \cite{mogren2016c, esteban2017real, yoon2019time, xu2020cot, pei2021towards, jeha2022psa, jeon2022gt}, which pioneer adversarial training frameworks that capture sharp temporal distributions but suffer from non-convex optimization instability and catastrophic mode collapse;
Variational Autoencoders (VAEs) \cite{desai2021timevae, naiman2023generative}, which ensure training stability via evidence lower bound optimization with KL-regularized latent spaces, at the expense of over-smoothed outputs that attenuate high-frequency details;
Denoising Diffusion Probabilistic Models (DDPMs) \cite{kong2020diffwave, coletta2024constrained, yuan2024diffusion}, which achieve state-of-the-art performance through iterative denoising processes but face deployment challenges, including architectural complexity, limited controllability, and computational inefficiency.

In recent years, large transformer-based language models, commonly referred to as LMs or LLMs, have emerged as the predominant solution for natural language generation tasks \cite{achiam2023gpt, anil2023palm}. These models have demonstrated remarkable capabilities in generating high-quality text and have progressively expanded their applications to encompass various modalities beyond natural language. Notably, significant advancements have been made in image generation \cite{ramesh2021zero, yu2022scaling, chang2022maskgit, chang2023muse}, video synthesis \cite{yu2023magvit, villegas2022phenaki,kondratyuk2023videopoet}, and audio generation \cite{zeghidour2021soundstream,copet2023simple,chen2025neural} through the innovative application of Discrete Token Modeling (DTM) technique.
DTM operates by learning discrete representations of diverse data modalities and treating them in a manner analogous to natural language tokens. This technique enables direct adaptation of pre-trained language models for cross-modal generation tasks. The success of these methods has been substantial, yielding impressive results in various multimodal generation tasks.

The remarkable success of the DTM technique in visual and audio domains has motivated its recent extension to time series tasks \cite{zhicheng2024sdformer,feng2025hdt}. Specifically, our prior work SDformer \cite{zhicheng2024sdformer} established the first DTM-based framework that demonstrates statistically significant superiority over GANs, VAEs, and DDPMs in the time series generation task. 
Despite these empirical successes, the theoretical foundations of DTM remain underdeveloped, which limits its methodological advancement and broader application.
Moreover, complex time series often exhibit multi-scale temporal patterns, encompassing both low-frequency trends and short-term fluctuations. Current DTM-based approaches for time series generation, such as SDformer, predominantly utilize single-scale modeling frameworks. This limitation restricts their ability to comprehensively capture the multi-scale temporal patterns inherent in complex time series data.

To address the aforementioned challenges, we propose a novel DTM-based framework for time series generation, termed the Multi-Scale Discrete Transformer (MSDformer). This framework extends SDformer \cite{zhicheng2024sdformer}, advancing from single-scale to multi-scale modeling.
Similar to existing DTM-based approaches, MSDformer operates in a two-stage pipeline. In the first stage, MSDformer constructs a multi-scale time series tokenizer through cascaded residual VQ-VAEs \cite{pang2017cascade,van2017neural}, progressively learning discrete token representations at multiple scales. These multi-scale token sequences collectively capture the comprehensive characteristics of the original time series, enabling a more nuanced representation of both low-frequency and high-frequency features.
In the second stage, MSDformer adopts an autoregressive token modeling technique to learn the distribution of multi-scale token sequences, thereby effectively capturing the multi-scale temporal patterns inherent in time series data. This technique is naturally better suited to sequential data than the masked token modeling technique, as illustrated by SDformer. 
Furthermore, we prove that DTM-based methods offer superior controllability in the rate-distortion trade-off compared to other continuous models, leveraging Shannon’s Rate-Distortion Theorem. Based on this, we further theoretically demonstrate that multi-scale modeling can achieve lower distortion. In summary, our contributions include:
\vspace{0.1in}
\begin{itemize}[leftmargin=*]
    \item \textbf{Multi-scale Generation.} We introduce a novel DTM-based method, MSDformer, which extends SDformer by  an advancement from single-scale to multi-scale modeling. This advancement enables MSDformer to effectively capture the multi-scale temporal patterns inherent in time series data, thereby enhancing generative capabilities.
    \vspace{0.1in}
    \item \textbf{Strong Controllability.} We develop a rate–distortion analysis theoretically demonstrating that discrete tokenization induces an explicit, adjustable rate constraint via finite codebooks, thereby providing stronger controllability than continuous latent-variable models. We further show that, under the same overall rate budget, allocating capacity across scales can reduce distortion more effectively than increasing the codebook size in a single-scale model.
    \vspace{0.05in}
    \item \textbf{Long-term Generation Capability.} Extensive experiments demonstrate that MSDformer consistently outperforms SDformer and other continuous generative models, with particularly clear gains in long-term time-series generation. Compared with SDformer, MSDformer achieves average  improvements of 34.5\% in Discriminative Score and 12.6\% in Context-FID.

\end{itemize}

\section{Related Work}
In this section, we focus on two key directions: 1) advancements in time series generative models, and 2) the applications and developments of the discrete token modeling technique, both of which jointly motivate our framework.
\subsection{Time Series Generation}
\noindent\textbf{Generative Adversarial Networks (GANs)-based Models.} With the continuous advancement of deep generative models, the field of time series generation has achieved remarkable progress.
The initial wave of research predominantly focuses on Generative Adversarial Networks as the primary architecture for time series generation \cite{mogren2016c,esteban2017real,yoon2019time,xu2020cot,pei2021towards,jeha2022psa}. 
For example, TimeGAN \cite{yoon2019time} introduces novel enhancements by incorporating an embedding function and supervised loss to better capture temporal dynamics.
COT-GAN \cite{xu2020cot} innovatively combines the underlying principles of GANs with Causal Optimal Transfer theory, enabling stable generation of both low- and high-dimensional time series data. 

GANs-based models, despite their progress, remain notoriously hard to train due to optimization instability \cite{mescheder2018training, kodali2017convergence, thanhimproving}. They often suffer from non-convergence (oscillatory training), vanishing gradients when the discriminator becomes overly strong, and mode collapse \cite{thanh2020catastrophic}. The latter is especially detrimental for time-series generation \cite{brophy2021generative, wang2024deep}, where the model may degenerate to producing a small set of highly similar sequences (e.g., repetitive periodic patterns), failing to capture the diversity and fine-grained temporal dynamics of real data.


\noindent\textbf{Variational Autoencoders (VAEs)-based Models.} Since VAEs are are known to be more robust to the GANs-based issues above, some attentions are turned to this type of generative paradigm for time series generation. For example, TimeVAE \cite{desai2021timevae}, with its interpretable temporal structure, stands out as a significant advancement, delivering superior performance in time series synthesis. KoVAE \cite{naiman2023generative} integrates Koopman operator theory with VAEs to provide a unified modeling framework for both regular and irregular time series data. 

However, traditional VAEs for time-series generation typically rely on either an unstructured Gaussian prior or structured (e.g., approximately linear) latent dynamics. While these designs can aid inference or dynamical interpretability, they do not necessarily impose explicit constraints aimed at latent controllability, so the learned representations may remain entangled and hard to manipulate. Consequently, controllable generation is limited, and posterior collapse can still occur.

\noindent\textbf{Denoising Diffusion Probabilistic Models (DDPMs).} Denoising Diffusion Probabilistic Models \cite{ho2020denoising} mark a transformative phase in generative modeling, with rapid adaptation to time series generation. Early applications like DiffWave \cite{kong2020diffwave} demonstrate the potential of DDPMs in waveform generation, while more recent innovations such as DiffTime \cite{coletta2024constrained} leverages advanced score-based diffusion models for enhanced temporal data synthesis. The field reaches new heights with Diffusion-TS \cite{yuan2024diffusion}, a diffusion model framework that generates high-quality multivariate time series samples using an encoder-decoder Transformer architecture with disentangled temporal representations. 

While these approaches achieve competitive results, diffusion models typically map non-i.i.d. time-series distributions to an isotropic Gaussian prior via iterative noise
injection, producing long and complex trajectories. This may hinder sampling efficiency and limit performance,  especially in long-term generation setting, which are shown in our following experimental section.


\subsection{Discrete Token Modeling}
In contrast to continuous generative models above, which operate in a non-explicit constrained latent space or require costly iterative sampling, discrete token modeling strategy provides a new generation paradigm. Discrete Token Modeling (DTM) is a technique for modeling discrete token sequences using sequence models, originally widely applied in natural language modalities \cite{devlin2019bert}. In comparison, GAN- and VAE-based generators often struggle to provide reliable and fine-grained control: GANs are prone to unstable training and mode collapse, while VAEs may yield entangled latents and suffer from posterior collapse, limiting controllable synthesis. Diffusion models improve sample quality but typically require dozens to hundreds of denoising steps, which makes long-horizon generation computationally expensive. By discretizing time series into an explicit token space, DTM offers a more controllable and efficient alternative, where  generation is performed in an explicit, finite code space that is easier to constrain, manipulate, and sample efficiently. In recent years, the rise of large-scale language models has driven a new era of development in artificial intelligence \cite{achiam2023gpt, anil2023palm}, highlighting the efficiency and effectiveness of the DTM technique in sequence data. Moreover, DTM has been extended to non-natural language modalities—such as iamge \cite{ramesh2021zero}, video \cite{yan2021videogpt}, and audio \cite{du2022vqtts} through vector quantization models like VQ-VAE \cite{van2017neural} and VQ-GAN \cite{esser2021taming}.



Early discrete token modeling (DTM) in non-language modalities mainly follows two paradigms: Autoregressive Token Modeling (ARTM) and Masked Token Modeling (MTM). ARTM predicts the next token from previous tokens using a categorical distribution, and has been widely used in text-to-image generation (e.g., DALL-E \cite{ramesh2021zero}, Parti \cite{yu2022scaling}) as well as text-to-motion synthesis (e.g., T2M-GPT \cite{zhang2023t2m}, MotionGPT \cite{jiang2024motiongpt}). In contrast, MTM adopts a masked prediction objective \cite{devlin2018bert}, reconstructing randomly masked tokens from the observed context, which underlies models such as MaskGIT \cite{chang2022maskgit} and MUSE \cite{chang2023muse} for efficient and effective image generation. More recently, Visual AutoRegressive Modeling (VAR) \cite{tian2024visual} introduces a new DTM paradigm that reframes autoregression as coarse-to-fine ``next-scale '' prediction: images are tokenized into multi-scale token maps via hierarchical quantization, and generation proceeds autoregressively from low to high resolutions by predicting the next higher-resolution token map, improving global coherence while preserving fine local details. Also, this type of multi-scale modeling perspective is also naturally compatible with time-series data \cite{chen2023multi, chen2024pathformer, feng2023multi}, where temporal dynamics always exhibit hierarchical structure.

Inspired by the broad applicability of the DTM technique across modalities, researchers have begun exploring its potential for time series tasks \cite{zhicheng2024sdformer,feng2025hdt}. 
HDT \cite{feng2025hdt} proposes a hierarchical DTM-based framework that achieves high accuracy in high-dimensional, long-term time series forecasting tasks.
Our prior work, SDformer \cite{zhicheng2024sdformer}, introduces the DTM technique to time series generation and proposes a similarity-driven vector quantization method, achieving a significant breakthrough in generation quality.

However, our previous SDformer adopts a single-scale, similarity-based discrete modeling strategy in the latent space. While effective, it overlooks the inherent multi-scale nature of time-series data and lacks a systematic theoretical and empirical investigation of why discrete modeling is beneficial. In this work, we demonstrate that MSDformer consistently improves short- and long-term generation, outperforming SDformer as well as representative continuous generative baselines. Moreover, we find that multi-scale token modeling yields larger gains in generation quality than simply increasing the codebook size at a single scale, highlighting the advantage of explicitly modeling temporal hierarchies.

\section{Problem Statement}
Consider a multivariate time series as a sequence of observations denoted by $x_{1:\tau}=(x_1,\cdots,x_\tau)\in \mathbb{R}^{\tau\times d}$, where $\tau$ represents the number of time steps and $d$ denotes the dimensionality of variables at each time step. In practical applications, we typically work with a collection of such time series. Formally, a dataset containing $n$ time series can be represented as $D=\{x_{1:\tau}^i\}_{i=1}^n$, where each $x_{1:\tau}^i$ corresponds to an independent realization of the underlying temporal process.

The task of time series generation aims to learn the intrinsic distribution of the observed dataset $D$ and subsequently generate new, realistic time series samples that preserve the statistical properties of the original dataset. Mathematically, this can be formulated as learning a generative model $f_\theta$ that maps samples from a known prior distribution to the time series domain:
\begin{equation}
    \hat{x}_{1:\tau}^i = f_\theta(Z),
\end{equation}
where $Z$ is typically sampled from a simple distribution (e.g., Gaussian distribution) and serves as the source of randomness for generation.

\section{Methods}
\begin{figure*}[t!]
\begin{center}
\includegraphics[width=1\textwidth]{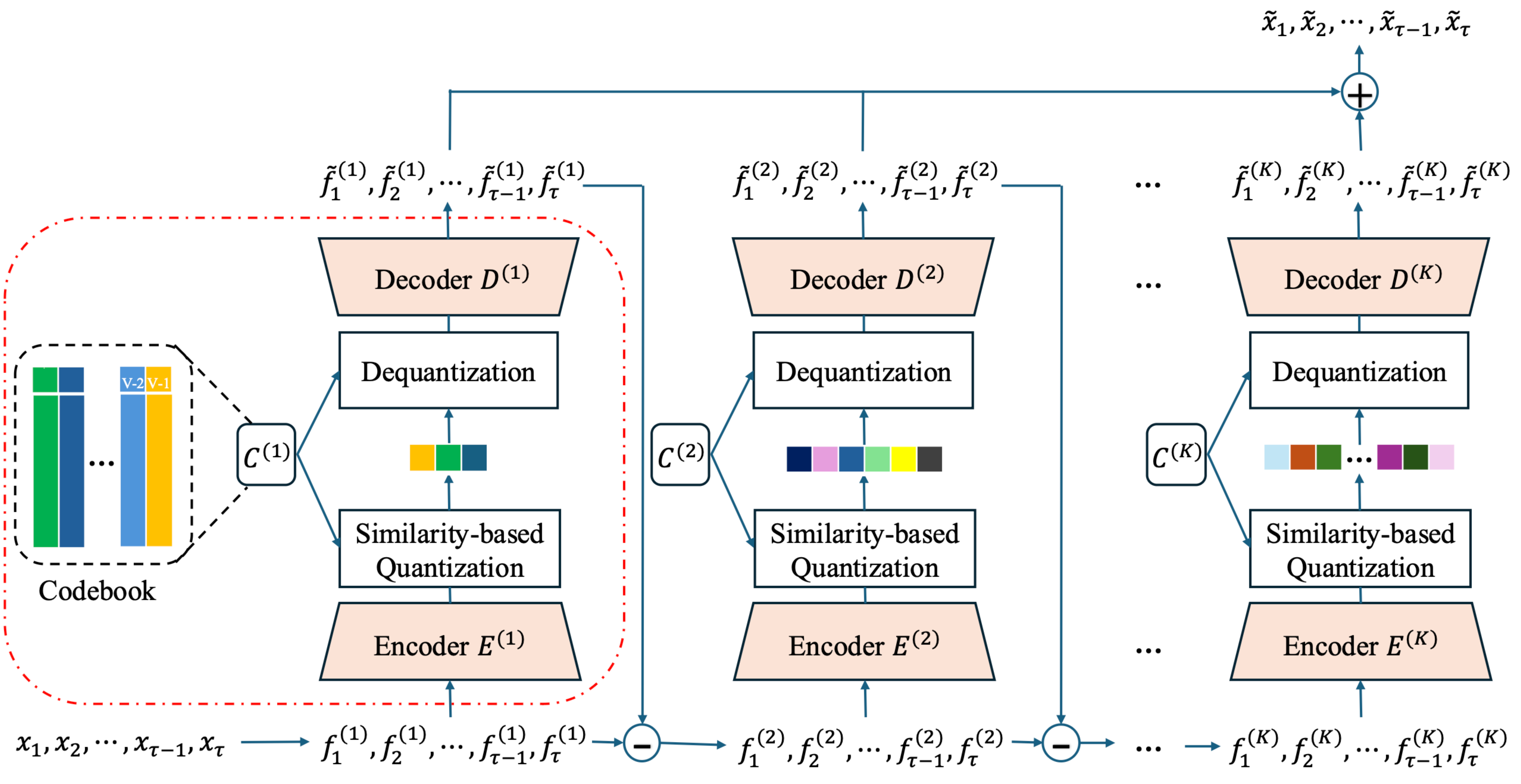}
\end{center}
\caption{Workflow of the multi-scale time-series tokenizer. The tokenizer consists of (K) modules, each implemented as a VQ-VAE with similarity-driven vector quantization to learn discrete tokens at a specific scale. The first module captures the coarse, large-scale structure of the series, while each subsequent module models the residual (i.e., the difference between the input and the reconstruction from the previous scale). Token sequences from all scales are then aggregated to represent the full time series. When (K=1), the tokenizer reduces to the SDformer \cite{zhicheng2024sdformer}, indicated by the red dashed line.}
\label{fig:model}
\end{figure*}


\begin{figure}[t!]
\begin{center}
\includegraphics[width=0.5\textwidth]{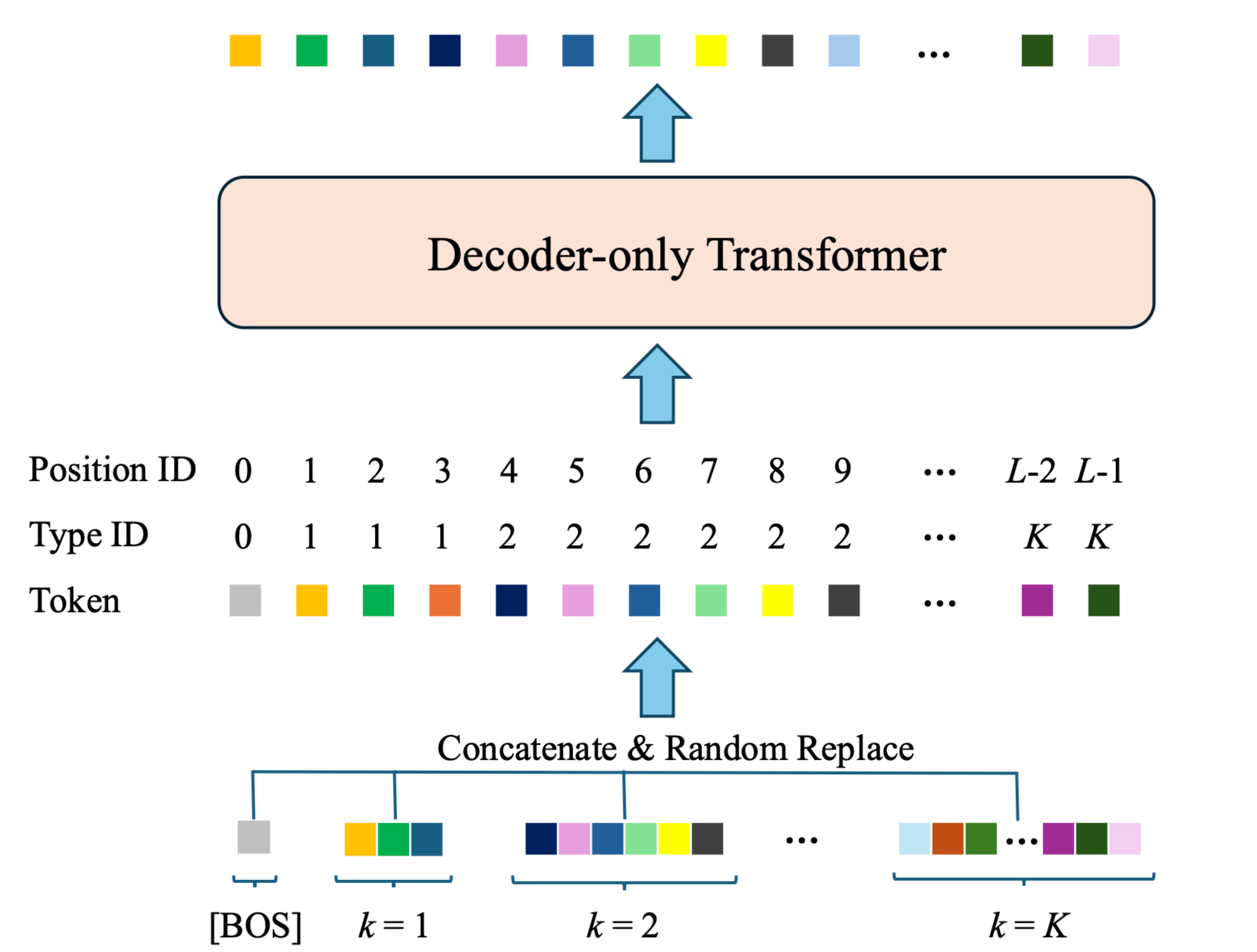}
\end{center}
\caption{Stage-2 workflow of MSDformer (multi-scale autoregressive token modeling). A [BOS] token is prepended to the concatenation result of each scale’s discrete token sequence, and the resulting tokens with type and position IDs are fed into a decoder-only Transformer to autoregressively predict the next token. When (K=1), it reduces to ourSDformer’s single-scale token modeling \cite{zhicheng2024sdformer}.
}
\label{fig:transformer}
\end{figure}
In this section, we introduce MSDformer, a novel multi-scale DTM-based model tailored for time series generation. MSDformer is a two-stage framework designed to learn and model multi-scale information and patterns in time series data.
In the first stage, MSDformer employs a multi-scale time series tokenizer that leverages similarity-driven vector quantization to generate high-quality discrete token representations across multiple scales, as depicted in Fig. \ref{fig:model}. Building on these multi-scale discrete token representations, the second stage utilizes a multi-scale autoregressive token modeling technique to capture the multi-scale patterns of time series in the discrete latent space, as illustrated in Fig. \ref{fig:transformer}.

\subsection{Multi-scale Time Series Tokenizer}

The multi-scale tokenizer implements cascaded residual learning \cite{pang2017cascade} through stacked VQ-VAE models \cite{van2017neural}, where each subsequent model hierarchically encodes residual information from the preceding scale. This coarse-to-fine decomposition progressively refines temporal features, yielding more precise and detailed representations of the underlying time series data. Algorithm \ref{alg:vqvae} outlines the training procedure for the multi-scale time series tokenizer in this stage, in which our objective is to train a multi-scale time series tokenizer that can produce high-quality discrete representations of time series data while ensuring excellent reconstruction performance. 

Let there be $ K $ scales, each with an encoder $\mathcal{E}^{(k)}$, a decoder $\mathcal{D}^{(k)}$, a codebook $ C^{(k)} $, and a downsampling factor $ r^{(k)} $, where $ k = 1, \ldots, K $. The learnable codebook $C^{(k)} = \{c_v\}_{v=0}^{V^{(k)}-1} \subset \mathbb{R}^{d_c}$ at each scale contains $ V^{(k)} $ latent embedding vectors, each of dimension $ d_c $. The downsampling factor $ r^{(k)}\geq 1 $ decreases as $ k $ increases, meaning that smaller $ k $ processes larger scale information.

The input to the model at each scale is the residual $ f_{1:\tau}^{(k)} \in \mathbb{R}^{\tau \times d} $. When $ k = 1 $, $ f_{1:\tau}^{(k)} $ is initialized as $ x_{1:\tau} $. For $ k > 1 $, $ f_{1:\tau}^{(k)} $ is the residual between the input of the previous scale and its reconstruction output.
Specifically, starting from $ k = 1 $, the following operations are performed:

\textbf{(1) Encoding Process.} First, the encoder $\mathcal{E}^{(k)}$ initially applies 1D convolutions to the input $f_{1:\tau}^{(k)}$ along the temporal dimension, producing encoded vector:
\begin{equation}
    h_{1:L^{(k)}}^{(k)}=\mathcal{E}^{(k)}(f_{1:\tau}^{(k)}),
    \label{eq:encoder}
\end{equation}
where $h_{1:L^{(k)}}^{(k)} \in \mathbb{R}^{L^{(k)} \times d_c}$ is the encoded feature, and $L^{(k)} = \tau / r^{(k)}$ is the length in the discrete latent space at the $k$-th scale.

\textbf{(2) Vector Quantization.} Then, the similarity-driven vector quantization $Q(\ \cdot \ ;C^{(k)})$ identifies the index of the vector in the codebook $C^{(k)}$ that exhibits the highest similarity to $h_i^{(k)}$, which can be expressed as:
\begin{equation}
    y_{i}^{(k)}=Q(h_i^{(k)};C^{(k)}):=\arg\max_{v}h_i^{(k)}\cdot c_v^{(k)},
\label{Eq:quantization}
\end{equation}
where $\cdot$ denotes the inner product, and the discrete token sequence of the $k$-th scale can be represented as $y_{1:L^{(k)}}^{(k)} \in \prod_{\ell=1}^{L^{(k)}} \{0, 1, \ldots, V^{(k)}-1\}$, with $\prod$ denoting the Cartesian product.

\textbf{(3) Dequantization Process.} Following quantization, the dequantization process $Q^{-1}(\ \cdot \ ;C^{(k)})$ reverts $y_i^{(k)}$ back to the latent embedding vector, denoted as:
\begin{equation}
    \tilde{h}_{i}^{(k)}=Q^{-1}(y_{i}^{(k)};C^{(k)}):=c_{y_{i}^{(k)}}^{(k)},
\label{Eq:dequantization}
\end{equation}

\textbf{(4) Decoding Process.} Subsequently, the decoder $\mathcal{D}^{(k)}$ restores it to the raw time series space:
\begin{equation}
    \tilde{f}_{1:\tau}^{(k)}=\mathcal{D}^{(k)}(\tilde{h}_{1:L^{(k)}}^{(k)}).
    \label{eq:decoder}
\end{equation}

\textbf{(5) Residual Calculation.} Lastly, we obtain the residual input for the next scale, denoted as:
\begin{equation}
    f_{1:\tau}^{(k+1)} = f_{1:\tau}^{(k)} - \tilde{f}_{1:\tau}^{(k)}.
    \label{eq:residual}
\end{equation}
After completing the above steps, continue to repeat the above operations of modeling the residual $f_{1:\tau}^{(k+1)}$ at the $(k+1)$-th scale until the $K$-th scale, following steps (1)-(5).

After processing all scales as described above, the final reconstruction of the time series is achieved by aggregating the reconstruction results from all scales:
\begin{equation}
    \tilde{x}_{1:\tau} = \sum_{k=1}^{K} \tilde{f}_{1:\tau}^{(k)}.
    \label{eq:reconstruction}
\end{equation}

To train this multi-scale time series tokenizer, we utilize two distinct loss functions for training and optimizing the parameters of $\mathcal{E}^{(k)}$ and $\mathcal{D}^{(k)}$:

\begin{equation}
    \mathcal{L} = \left\| x_{1:\tau} - \tilde{x}_{1:\tau} \right\|_2^2 
    + \lambda \sum_{k=1}^K \frac{1}{L^{(k)}} \sum_{i=1}^{L^{(k)}} \left( 1 - h_{i}^{(k)} \cdot \text{sg}(\tilde{h}_{i}^{(k)}) \right),
\label{Eq:vqloss}
\end{equation}
where the first loss is the reconstruction loss, the second loss is the embedding loss, sg$(\cdot)$ represents the stop gradient, and $\lambda$ is hyperparameter used to adjust the weight of different parts. 

For the codebook vectors, an Exponential Moving Average technique \cite{williams2020hierarchical} is used for updating. Specifically, the $y_{i}^{(k)}$-th vector $ c_{y_{i}^{(k)}}^{(k)} $ in the codebook $C^{(k)}$, which is used in Equation \eqref{Eq:quantization} through vector quantization, is updated as follows:

\begin{equation}
    \tilde{c}_{y_{i}^{(k)}}^{(k)}= (1-\beta) c_{y_{i}^{(k)}}^{(k)} + \beta h_i^{(k)},
    \label{Eq:ema}
\end{equation}
where $\tilde{c}_{y_{i}^{(k)}}^{(k)}$ represents the updated result, and $\beta > 0$ is a hyperparameter that adjusts the weight, typically chosen to be a small value to ensure the stability of the update.

The codebook update mechanism in Equation \eqref{Eq:ema} introduces potential optimization stagnation: vectors initialized far from encoded feature clusters may never receive gradient updates. To mitigate this, we implement a Codebook Reset strategy \cite{williams2020hierarchical} that dynamically reinitializes underutilized vectors. 
The average usage count $u_v^{(k)} $ for each codebook vector $c_v^{(k)} $ is maintained through exponential smoothing:
\begin{equation}
    \tilde{u}_v^{(k)}  = (1-\beta)u_v^{(k)}  + \beta n_v^{(k)} ,
    \label{eq:code_usage_rate}
\end{equation}
where $ n_v^{(k)}  $ counts the current batch usage of $ c_v^{(k)}  $, and $ \tilde{u}_v^{(k)}  $ is the updated average usage count. In the Codebook Reset technique, whether $ c_v^{(k)}  $ is reset can be expressed as:
\begin{equation}
    \tilde{c}_v^{(k)}  = \begin{cases}
        e_j \sim \mathcal{U}(H^{(k)}), & \text{if } \tilde{u}_v^{(k)}  < u_{th} \\
        c_v^{(k)} , & \text{otherwise}
    \end{cases},
    \label{Eq:reset}
\end{equation}
where $ H^{(k)} $ contains encoded features from the current batch via encoder $ \mathcal{E}^{(k)} $, and $ \mathcal{U}(H^{(k)}) $ denotes uniform sampling from $ H^{(k)} $. 
The reset threshold $u_{th}=1$ ensures replacement of vectors averaging less than one usage per batch.

When the multi-scale time series tokenizer training is completed, all codebooks and parameters will be frozen. By employing this multi-scale time series tokenizer, a multivariate time series $x_{1:\tau} \in \mathbb{R}^{\tau \times d}$ can be mapped to $K$ time series tokens sequences $\left\{ y^{(1)}_{1:L^{(1)}}, \ldots, y^{(k)}_{1:L^{(k)}}, \ldots, y^{(K)}_{1:L^{(K)}} \right\}$.

\begin{algorithm}[h!]
    \caption{Training of multi-scale time series tokenizer.}
    \label{alg:vqvae}
     \textbf{Input:} Time series dataset $D=\{x_{1:\tau}^i\}_{i=1}^n$\\ 
    \textbf{Output:} Multi-scale Encoder $\{\mathcal{E}^{(k)}\}_{k=1}^K$, multi-scale Decoder $\{\mathcal{D}^{(k)}\}_{k=1}^K$ and multi-scale codebooks $\{C^{(k)}\}_{k=1}^K$.\\
    \vspace{-0.1in}
    \begin{algorithmic}[1]
    \For{$i\leftarrow 1$ to $N_{iter}$}
        \State Get the $X_{1:\tau}$ $\sim$ $D$; 
        \State Initialize residual input $f_{1:\tau}^{(1)}\leftarrow X_{1:\tau}$;
        \For{$k\leftarrow 1$ to $K$}
            \State Get the $h_{1:L^{(k)}}^{(k)}$ by Equation \eqref{eq:encoder};
            \State Get the $y_{1:L^{(k)}}^{(k)}$ by Equation \eqref{Eq:quantization};
            \State Get the $\tilde{h}_{1:L^{(k)}}^{(k)}$ by Equation \eqref{Eq:dequantization};
            \State Get the $\tilde{f}_{1:\tau}^{(k)}$ by Equation \eqref{eq:decoder};
            \State Get the $f_{1:\tau}^{(k+1)}$ by Equation \eqref{eq:residual};
        \EndFor
        \State Get the $\tilde{x}_{1:\tau}$ by Equation \eqref{eq:reconstruction};
        \State Compute the training loss $\mathcal{L}$ by Equation \eqref{Eq:vqloss};
        \State Complete backpropagation process based on $\mathcal{L}$ and update the parameters of $\{\mathcal{E}^{(k)}\}_{k=1}^K$ and $\{\mathcal{D}^{(k)}\}_{k=1}^K$;
        \State Update the codebook by Equations \eqref{Eq:ema} and \eqref{Eq:reset};
    \EndFor
    \State Return trained $\{\mathcal{E}^{(k)}\}_{k=1}^K$, $\{\mathcal{D}^{(k)}\}_{k=1}^K$, and $\{C^{(k)}\}_{k=1}^K$.
    \end{algorithmic}
\end{algorithm}

\subsection{Multi-scale Autoregressive Token Modeling}
In this subsection, we focus on utilizing an autoregressive token modeling technique to learn the data distribution of the time series in the multi-scale discrete token space. In this stage, we aim to train a multi-scale autoregressive discrete Transformer that learns the distribution of multi-scale discrete token sequences, and the training process is shown in Algorithm~\ref{alg:train_ms_ar}. Besides, for the training algorithms for our SDformer, simply set the number of scales $K$ in MSDformer to 1.

\textbf{Token Sequence Construction.} First, the range of discrete tokens at the $ k $-th scale is $\{0, 1, \ldots, V^{(k)}-1\}$. Since token indices at different scales start from 0, the same token index may have different meanings across scales. To distinguish these tokens, the range of discrete tokens at each scale is adjusted to ensure uniqueness across scales. Specifically, let $ S^{(k)} := \sum_{m=1}^k V^{(m)} $ denotes the cumulative vocabulary size up to the $ k $-th scale, where $ S^{(0)} := 0 $. For each scale $ k \in \{1, 2, \ldots, K\} $, the original token $ y^{(k)}_i \in \{0, 1, \ldots, V^{(k)}-1\} $ is shifted to ensure its uniqueness:
\begin{equation}
    \bar{y}^{(k)}_i = y^{(k)}_i + S^{(k-1)} \in \{S^{(k-1)}, S^{(k-1)}+1, \ldots, S^{(k)}\}.
    \label{eq:bias}
\end{equation}
This approach ensures that the token range at each scale is unique, avoiding ambiguity in token meanings across scales.

After processing the token ranges for all scales, the token sequences from all scales are concatenated in order to form the final token sequence:
\begin{equation}
    y_{1:L} = \bar{y}^{(1)}_{1:L^{(1)}} \oplus \bar{y}^{(2)}_{1:L^{(2)}} \oplus \ldots \oplus \bar{y}^{(K)}_{1:L^{(K)}},
    \label{eq:label}
\end{equation}
where $ L = \sum_{k=1}^K L^{(k)} $, and $ \oplus $ denotes concatenation along the time dimension.

\textbf{Input Sequence Construction.} By shifting $ y_{1:L} $ and concatenating the [BOS] token, the input sequence is constructed as:
\begin{equation}
    y^{\text{in}}_{1:L} = \text{[BOS]} \oplus y_{1:L-1},
    \label{eq:input}
\end{equation}
where the index of the [BOS] token is set to $ V = \sum_{k=1}^{K} V^{(k)} $ to ensure its uniqueness in the multi-scale token sequence.

Additionally, for data augmentation, each element $ \bar{y}^{(k)}_i $ in the input sequence is randomly replaced with probability $ \epsilon $:
\begin{equation}
    \tilde{y}^{(k)}_i = 
    \begin{cases}
        \text{Randint}(S^{(k-1)}, S^{(k)}), & \text{if } \xi \leq \epsilon \\
        \bar{y}^{(k)}_i, & \text{otherwise}
    \end{cases},
    \label{Eq:random_replacement}
\end{equation}
where $\text{Randint}(S^{(k-1)}, S^{(k)})$ denotes a random integer uniformly sampled from the range $ S^{(k-1)} $ to $ S^{(k)} - 1 $, $ \mathcal{U}(0,1) $ represents a uniform distribution over the interval $[0, 1]$, and $\xi\sim\mathcal{U}(0,1)$ is a random number sampled from this uniform distribution. Finally, the model takes the augmented sequence $ \tilde{y}^{\text{in}}_{1:L} $ as the final input sequence.

\begin{algorithm}[h!]
    \caption{Training of  multi-scale autoregressive discrete Transformer.}\label{alg:train_ms_ar}
    \textbf{Input:} Time series dataset $D=\{x_{1:\tau}^i\}_{i=1}^n$, optimized multi-scale time series tokenizer.  \\ 
    \textbf{Output:}Multi-scale autoregressive Discrete Transformer $\mathcal{G}_{ms}$. \\
    \vspace{-0.1in}
    \begin{algorithmic}[1]
    \For{$i\leftarrow 1$ to $N_{iter}$}
        \State Feed the $x_{1:\tau}$ to multi-scale time series tokenizer and get $\left\{ y^{(1)}_{1:L^{(1)}}, \ldots, y^{(k)}_{1:L^{(k)}}, \ldots, y^{(K)}_{1:L^{(K)}} \right\}$.
        \State Get the label $y_{1:L}$ by Equations \eqref{eq:bias} and \eqref{eq:label}; 
        \State Get the input $\tilde{y}_{1:L}^{in}$ by Equations \eqref{eq:input} and \eqref{Eq:random_replacement};
        \State Feed the $\tilde{y}_{1:L}^{in}$ to $\mathcal{G}_{ms}$;
        \State Compute the training loss $\mathcal{L}_{ms}$ by Equation \eqref{eq:multi-scale_ar_loss};
        \State Complete backpropagation process based on $\mathcal{L}_{ms}$ and update the parameters of $\mathcal{G}_{ms}$;
    \EndFor
    \State Return trained $\mathcal{G}_{ms}$.
    \end{algorithmic}
\end{algorithm}

\textbf{Multi-scale Autoregressive Discrete Transformer.} 
In this part, we construct a multi-scale autoregressive discrete Transformer model, denoted as $\mathcal{G}_{ms}$. The input sequence to $\mathcal{G}_{ms}$ is $\tilde{y}^{\text{in}}_{1:L}$, and the prediction target is $y_{1:L}$. Since $y_{1:L}$ is arranged in order from coarse to fine scales, $\mathcal{G}_{ms}$ first generates data reflecting trends and overall structures in the time series, followed by progressively refining the details. The architecture utilizes a standard decoder-only Transformer framework \cite{vaswani2017attention}, with its core difference residing in the design of the input embedding representation. Specifically, the embedding representation of a standard autoregressive discrete Transformer is composed of the sum of Token Embedding and Positional Encoding:
\begin{equation}
    h_i^{\text{base}} = \mathbf{E}_t[\tilde{y}_i^{\text{in}}] + \mathbf{E}_p[i],
    \label{eq:base_embedding}
\end{equation}
where $\mathbf{E}_t \in \mathbb{R}^{(V+1) \times d_m}$ is the token embedding matrix, $d_m$ is the hidden dimension of the Transformer, and $\mathbf{E}_p \in \mathbb{R}^{L \times d_m}$ is the positional encoding matrix. To model multi-scale tokens, $\mathcal{G}_{ms}$ additionally introduces Token Type Encoding:
\begin{equation}
    h_i^{\text{ms}} = \underbrace{\mathbf{E}_t[\tilde{y}_i^{\text{in}}]}_{\text{Token Embedding}} + \underbrace{\mathbf{E}_p[i]}_{\text{Positional Encoding}} + \underbrace{\mathbf{E}_s[k_i]}_{\text{Token Type Encoding}},
    \label{eq:ms_embedding}
\end{equation}
where $\mathbf{E}_s \in \mathbb{R}^{(K+1) \times d_m}$ is the token type encoding matrix, which uses to explicitly indicate which scale each token comes from. Since tokens from different scales are concatenated into a single input sequence, this embedding prevents the Transformer from confusing tokens of different temporal resolutions and enables scale-aware attention and prediction, $K$ is the number of scales, and the additional dimension is reserved for the [BOS] token. Here, $k_i \in \{0, 1, \ldots, K\}$ is the scale identifier for the token, defined as follows.

Given $L^{(k)}$ as the number of tokens at the $k$-th scale, the cumulative token count $T^{(k)}$ is defined as:
\begin{equation}
    T^{(k)} = \sum_{m=1}^k L^{(m)}, \quad T^{(0)} = 0.
    \label{eq:sum_code}
\end{equation}
The assignment rule for the token type index $k_i$ is given by:
\begin{equation}
    k_i = 
    \begin{cases}
        0, & \text{if } i = 0 \\
        k \text{ s.t. } T^{(k-1)} < i \leq T^{(k)}, & \text{otherwise}
    \end{cases}.
    \label{eq:scale_assignment_final}
\end{equation}
Specifically, the [BOS] token corresponds to $k_i = 0$, while the $k_i$ for other tokens is uniquely determined by the scale interval $[T^{(k-1)}+1, T^{(k)}]$ in which their index $i$ falls. The corresponding token type encoding sequence structure can be represented as:
\begin{equation}
    \underbrace{0,}_{i=0}\ 
    \underbrace{1, \dots, 1,}_{i=1 \text{ to } T^{(1)}}\ 
    \underbrace{2, \dots, 2,}_{i=T^{(1)}+1 \text{ to } T^{(2)}}\ 
    \dots \ 
    \underbrace{K, \dots, K.}_{i=T^{(K-1)}+1 \text{ to } T^{(K)}}
    \label{eq:type-emb-seq}
\end{equation}

In summary, the multi-scale autoregressive discrete Transformer model $\mathcal{G}_{ms}$ consists of the embedding representation layer described above, along with standard Transformer components such as multi-head self-attention mechanisms and feed-forward neural network modules. The output at each position is a probability distribution, with the output probability distribution at the $i$-th position defined as:
\begin{equation}
    P_{\mathcal{G}_{ms}}(y_i | \tilde{y}^{\text{in}}_{1:i}) = \text{Softmax}\left( \mathcal{G}_{ms}(\tilde{y}^{\text{in}}_{1:i}) \right).
    \label{eq:output2}
\end{equation}

\noindent\textbf{Training Objective.}
The training objective is to minimize the negative log-likelihood of all tokens:
\begin{equation}
    \mathcal{L}_{ms} = -\mathbb{E} \left[ \sum_{i=1}^L \log P_{\mathcal{G}_{ms}}(y_i | \tilde{y}^{\text{in}}_{1:i}) \right].
    \label{eq:multi-scale_ar_loss}
\end{equation}
In the specific implementation, the loss function is computed using cross-entropy.
\subsection{Inference Stage of MSDformer} 
\vspace{-0.5mm}
Once the multi-scale autoregressive discrete Transformer is trained, it can be used to generate time series data. This process consists of three steps: autoregressive token generation, multi-scale token assignment, and multi-scale decoding, as outlined in Algorithm~\ref{alg:msdformer}. Specifically, the process begins with the [BOS] token as the initial input, followed by the autoregressive generation of multi-scale discrete token sequences using the multi-scale autoregressive discrete Transformer. For the generated multi-scale discrete token sequences, the tokens are assigned back to their corresponding scales and restored to their original token value ranges. Finally, the original time series data is reconstructed through the dequantization process and the decoding process of the multi-scale time series tokenizer, thereby achieving the generation functionality.
To ensure the diversity of the generated data, each generated token is sampled from the predicted distribution provided by the Transformer.

\begin{algorithm}[h!]
\caption{The Inference Process of MSDformer.}
\label{alg:msdformer}
\begin{algorithmic}[1]
\Require
\State Target sequence length $\tau$, 
\State Multi-scale downsampling factors $\{r^{(k)}\}_{k=1}^K$, 
\State Multi-scale codebooks $\{C^{(k)}\}_{k=1}^K$, 
\State Multi-scale codebook sizes $\{V^{(k)}\}_{k=1}^K$, 
\State Trained multi-scale decoders $\{\mathcal{D}^{(k)}\}_{k=1}^K$, 
\State Multi-scale autoregressive discrete Transformer $\mathcal{G}_{ms}$

\State \textbf{Step 1: Autoregressive Token Generation}
\State $V \gets \sum_{k=1}^K V^{(k)}$ \Comment{Total vocabulary size}
\State $L \gets \sum_{k=1}^K \tau/r^{(k)}$ \Comment{Total sequence length}
\State $\hat{y}_{0} \gets V$ \Comment{Initialize with BOS token}
\For{$t = 1$ \algorithmicto \hspace{1pt} $L$}
    \State $\mathcal{Z} \gets \mathcal{G}_{ms}(\hat{y}_{0:t-1})$ \Comment{Generate token logits}
    \State $p \gets \text{Softmax}(\mathcal{Z}[-1])$ \Comment{Compute probabilities}
    \State $\mathcal{I} \gets \text{Sample}(p)$ \Comment{Sample next token}
    \State $\hat{y}_{0:t} \gets \hat{y}_{0:t-1} \oplus \mathcal{I}$ \Comment{Update sequence}
\EndFor

\State \textbf{Step 2: Multi-scale Token Allocation}
\State $i \gets 1$ \Comment{Initialize pointe}
\State $S^{(0)} \gets 0$ \Comment{cumulative vocabulary size}
\For{$k = 1$ \algorithmicto \hspace{1pt} $K$}
    \State $S^{(k)} \gets S^{(k-1)} + V^{(k)}$ \Comment{Update vocabulary offset}
    \State $L^{(k)} \gets \tau/r^{(k)}$ \Comment{Calculate scale-specific length}
    \State $\hat{y}^{(k)}_{1:L^{(k)}} \gets \hat{y}_{i:i+L^{(k)}} - S^{(k-1)}$ \Comment{Allocate tokens}
    \State $i \gets i + L^{(k)}$ \Comment{Move pointer}
\EndFor

\State \textbf{Step 3: Multi-scale Decoding}
\For{$k = 1$ \algorithmicto \hspace{1pt} $K$}
    \State $\hat{h}_{1:L^{(k)}}^{(k)} \gets Q^{-1}(\hat{y}^{(k)}_{1:L^{(k)}};C^{(k)})$\Comment{Dequantize tokens}
    \State $\hat{f}_{1:\tau}^{(k)} \gets \mathcal{D}^{(k)}(\hat{h}_{1:L^{(k)}}^{(k)})$ \Comment{Decoding}
\EndFor
\State $\hat{x}_{1:\tau} \gets \sum_{k=1}^K \hat{f}_{1:\tau}^{(k)}$ \Comment{Aggregate multi-scale outputs}
\Ensure Generated time series $\hat{x}_{1:\tau} \in \mathbb{R}^{\tau \times d}$

\end{algorithmic}
\end{algorithm}

\vspace{-0.1in}
\section{Theoretical Analysis}
\subsection{Theoretical Analysis of Discrete Token Modeling}
\label{sec:theorem_DTM}
In this section, we present a theoretical analysis of Discrete Token Modeling for time series generation, establishing its fundamental advantages over continuous latent variable approaches in achieving explicit rate-distortion control through finite codebook constraints.

\begin{theorem}[Shannon's Rate-Distortion Theorem \cite{shannon1959coding}]\label{thm:rate_distortion}
    \label{theorem:shannon}
    As an information-theoretic lens, consider a fixed-length time-series window (block) random vector
    \(X_{1:\tau} \sim P_{data}\) and a bounded distortion measure
    \(d:\mathcal{X}^{\tau}\times\tilde{\mathcal{X}}^{\tau}\to\mathbb{R}^{+}\).
    The minimum achievable rate at distortion level \(D\) is characterized by the
    (block) rate--distortion function: 
    \begin{equation}
        R{(D)}=\frac{1}{\tau} \inf_{\substack{p(\tilde{X}_{1:\tau}\,|\,X_{1:\tau}):\\ \mathbb{E}[d(X_{1:\tau},\tilde{X}_{1:\tau})]\le D}}
        I\!\left(X_{1:\tau};\tilde{X}_{1:\tau}\right),
    \end{equation}
    where \(I(X_{1:\tau};\tilde{X}_{1:\tau})\) denotes the mutual information between the window \(X_{1:\tau}\)
    and its reconstruction \(\tilde{X}_{1:\tau}\). \(\mathbb{E}[\cdot]\) denotes expectation.
\end{theorem}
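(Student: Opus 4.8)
The plan is to show that the information rate--distortion function $R(D)$ defined above is exactly the operational minimum rate of any source code meeting the distortion constraint, which amounts to establishing a \emph{converse} (lower bound) and a matching \emph{achievability} (upper bound). Since the Problem Statement posits that each window is an independent realization of the underlying process, I would treat the i.i.d.\ windows $X^{(1)}_{1:\tau}, X^{(2)}_{1:\tau}, \ldots \sim P_{data}$ as single super-symbols over the product alphabet $\mathcal{X}^{\tau}$ and code jointly over a block of $N$ windows; the prefactor $1/\tau$ merely rescales the super-symbol rate into a per-time-step rate. As a preliminary I would record two structural facts about $R(D)$ that both directions invoke: it is non-increasing in $D$, and it is convex in $D$, the latter following from convexity of $I(X_{1:\tau};\tilde{X}_{1:\tau})$ in the test channel $p(\tilde{X}_{1:\tau}\mid X_{1:\tau})$ together with a time-sharing argument.

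For the converse, fix any code that encodes $N$ windows into one of $2^{N\tau R}$ messages and decodes reconstructions $\tilde{X}^{(1)}_{1:\tau},\ldots,\tilde{X}^{(N)}_{1:\tau}$ with average distortion at most $D$. I would lower bound the log-cardinality by entropy and then by mutual information,
\[
N\tau R \;\ge\; H\big(\text{message}\big)\;\ge\; I\big(X^{(1:N)}_{1:\tau};\tilde{X}^{(1:N)}_{1:\tau}\big),
\]
exploit the independence of the windows with the chain rule to single-letterize,
\[
I\big(X^{(1:N)}_{1:\tau};\tilde{X}^{(1:N)}_{1:\tau}\big)\;\ge\;\sum_{j=1}^{N} I\big(X^{(j)}_{1:\tau};\tilde{X}^{(j)}_{1:\tau}\big)\;\ge\;\tau\sum_{j=1}^{N} R(D_j),
\]
where $D_j=\mathbb{E}[d(X^{(j)}_{1:\tau},\tilde{X}^{(j)}_{1:\tau})]$, and finally apply Jensen's inequality with the convexity and monotonicity of $R$ to get $\tfrac{1}{N}\sum_j R(D_j)\ge R(\tfrac1N\sum_j D_j)\ge R(D)$. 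Dividing through by $N\tau$ yields $R\ge R(D)$.

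For achievability I would run a random-coding and covering argument. Fix a test channel attaining the infimum in $R(D)$ up to $\epsilon$, let $p(\tilde{X}_{1:\tau})$ be its induced output marginal, and draw a codebook of $M=2^{N\tau(R(D)+\epsilon)}$ codewords i.i.d.\ from the product of $p(\tilde{X}_{1:\tau})$; the encoder maps a source block to any codeword jointly distortion-typical with it, declaring failure otherwise. The governing estimate is that a single random codeword is jointly typical with a fixed typical source block with probability $\approx 2^{-N\tau I(X_{1:\tau};\tilde{X}_{1:\tau})}$, so since $R(D)+\epsilon$ exceeds $I/\tau$, the probability that no codeword matches vanishes as $N\to\infty$. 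On the success event, distortion-typicality caps the per-window distortion near $D+\epsilon$, while the vanishing failure probability contributes negligibly because $d$ is bounded; averaging over the ensemble then exhibits a deterministic codebook with expected distortion $\le D+\epsilon'$, and letting $\epsilon\to 0$ closes the gap with the converse.

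I expect the main obstacle to lie entirely in the achievability direction, specifically the covering lemma: one must control the probability that the random codebook fails to contain a codeword jointly typical with the source block, which is precisely where the threshold $R(D)$ enters. The delicate points will be the second-moment (or expurgation) estimate certifying that $2^{N\tau(R(D)+\epsilon)}$ codewords suffice to cover the typical set with high probability, and the bookkeeping that converts the rare encoding-failure event into a negligible distortion penalty via boundedness of $d$. By contrast, the converse is a routine chain of information inequalities once the convexity and monotonicity of $R(D)$ are established.
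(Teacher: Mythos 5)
The paper does not prove this statement: Theorem~1 is quoted verbatim as Shannon's classical rate--distortion theorem and cited to \cite{shannon1959coding}, then used as a black box in the proof of Theorem~2. So there is no in-paper argument to compare against; what you have written is the standard textbook proof (converse via the chain $N\tau R \ge H(\text{message}) \ge I$, single-letterization over independent windows, and Jensen with convexity of $R(\cdot)$; achievability via random coding from the optimal test channel's output marginal and a covering lemma). As an outline of the classical proof it is correct, and your framing of each $\tau$-length window as a super-symbol over $\mathcal{X}^{\tau}$, with the $1/\tau$ prefactor rescaling to a per-time-step rate, is exactly the right way to reconcile the block formula in the statement with the operational coding theorem.

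Two small caveats if you were to flesh this out. First, the source here is continuous ($X_{1:\tau}\in\mathbb{R}^{\tau\times d}$ with MSE-type distortion), so the finite-alphabet joint-typicality machinery you invoke needs the usual extension (quantize-and-take-limits, or weak typicality for general alphabets); boundedness of $d$ as assumed in the statement is what makes the failure-event bookkeeping go through. Second, the operational ``minimum achievable rate'' is only attained in the limit of coding over arbitrarily many windows, which your $N\to\infty$ argument correctly supplies but which the paper's one-shot use of the theorem in Theorem~2 (a single window, a single codebook) quietly elides --- that gap belongs to the paper's downstream argument, not to your proof of this statement.
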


For the time series generation task, let $X_{1:\tau}=(X_1,\ldots,X_\tau)\in\mathbb{R}^{\tau\times d}$
denote a random window sampled from the data distribution $P_{data}$ and 
$\tilde{X}_{1:\tau}$
for its reconstruction through $P_{\mathcal{E},\mathcal{D}}(X_{1:\tau}| \theta_\mathcal{E}, \theta_{\mathcal{D}})$ of stage 1. The distortion is defined as the window MSE:
\begin{equation}
d(X_{1:\tau},\tilde{X}_{1:\tau})
=\frac{1}{\tau}\sum_{t=1}^{\tau}\|X_t-\tilde{X}_t\|^2.
\end{equation}
In the vector quantization method, the time series data \(X_{1:\tau}\) is compressed into a discrete token sequence $y_{1:L} \in \{0, 1, \ldots, V-1\}^L$ via an encoder $\mathcal{E}$ of stage 1. The rate-distortion control for this method is described in Theorem \ref{thm:vq_control}.

\begin{theorem}[Vector Quantization Rate-Distortion Control]
\label{thm:vq_control}
For any distortion level \(D > 0\), there exists \(V^*\) such that whenever the codebook size \(V \geq V^*\), there exists a coding scheme ensuring:
\[
\mathbb{E}\!\left[d(X_{1:\tau},\tilde{X}_{1:\tau})\right]\le D.
\]
\end{theorem}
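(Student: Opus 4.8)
The plan is to show that enlarging the codebook drives the operational coding rate of the quantizer above the finite rate that Theorem~\ref{thm:rate_distortion} certifies as sufficient for distortion $D$, and then to realize the matching distortion guarantee by an explicit quantizer construction. A VQ scheme with codebook size $V$ and $L$ latent positions maps each window $X_{1:\tau}$ to one of at most $M = V^{L}$ distinct reconstructions, so its per-symbol rate is $R_{\mathrm{VQ}} = \frac{L \log_2 V}{\tau}$, which is strictly increasing in $V$ and unbounded. Since $D>0$, Theorem~\ref{thm:rate_distortion} guarantees $R(D)<\infty$. The first step is therefore purely arithmetic: fix $D>0$, read off $R(D)<\infty$, and let $V^{*}$ be the smallest integer for which $\frac{L\log_2 V^{*}}{\tau} > R(D)$, so that $R_{\mathrm{VQ}} > R(D)$ for every $V \ge V^{*}$.

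To turn this rate surplus into a concrete distortion bound I would exhibit an explicit code rather than rely on the non-constructive random-coding argument behind Shannon's theorem. Viewing $X_{1:\tau}$ as a single random vector in $\mathbb{R}^{\tau d}$ with finite second moment, I would take $\mathcal{E}$ to be a (Lipschitz) linear embedding that losslessly reshapes the window into $L$ latent blocks in $\mathbb{R}^{d_c}$, with $\mathcal{D}$ its left inverse, and then tile each block with a $V$-point codebook whose cells have diameter shrinking in $V$. The per-block quantization error decomposes into an in-region term bounded by the squared cell diameter, which $\to 0$ as $V \to \infty$, and a tail term bounded by $\mathbb{E}\!\left[\|X_{1:\tau}\|^{2}\,\mathbf{1}_{\{\|X_{1:\tau}\|>R\}}\right]$, which $\to 0$ as $R \to \infty$. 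Since $\mathcal{D}$ is Lipschitz, small latent error transfers to small reconstruction error, and balancing $R$ against the cell size yields $\mathbb{E}\!\left[d(X_{1:\tau},\tilde{X}_{1:\tau})\right] \to 0$ as $V \to \infty$; hence this expectation is $\le D$ for all $V \ge V^{*}$, with $V^{*}$ possibly enlarged to absorb the constants from this construction.

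The hard part is this second step, namely bridging the gap between the information-theoretic guarantee and a \emph{structured, single-window} quantizer. The achievability direction underlying Theorem~\ref{thm:rate_distortion} is an asymptotic statement about block codes over many i.i.d.\ windows, whereas Theorem~\ref{thm:vq_control} demands a product of $L$ per-block codebooks acting on one fixed-length window. Making the argument rigorous requires a covering-number estimate for the effective support of $X_{1:\tau}$ together with explicit tail control, which is precisely where a finite-second-moment (or bounded-support) assumption on $P_{data}$ must be invoked, and it requires verifying that a product quantizer, rather than an arbitrary $M$-point quantizer, still attains vanishing distortion. I expect the tail estimate and this product-structure verification to be the most delicate points; the rate-counting in the first step is routine once $R(D)<\infty$ is in hand.
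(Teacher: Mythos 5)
Your first step is exactly the paper's entire proof: the authors invoke Theorem~\ref{thm:rate_distortion} as a black box ("if $R \ge R(D)$ then a scheme achieving distortion $D$ exists"), compute the VQ rate as $R = L\log_2 V$, and set $V^* = \lceil 2^{R(D)/L}\rceil$ so that $V \ge V^*$ forces $R \ge R(D)$. (Incidentally, your normalization $R_{\mathrm{VQ}} = L\log_2 V/\tau$ is more consistent with the $\tfrac{1}{\tau}$ factor in the statement of Theorem~\ref{thm:rate_distortion} than the paper's own unnormalized $R = L\log_2 V$.) Where you diverge is that you correctly identify, and then try to close, a gap the paper leaves open: Shannon achievability is an asymptotic statement about unstructured block codes over many i.i.d.\ windows, whereas the theorem is about a single window encoded by a product of $L$ per-position codebooks. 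Your proposed second step --- an explicit covering/lattice quantizer with shrinking cell diameter, a tail term controlled by a finite-second-moment assumption, and a Lipschitz decoder to transfer latent error to reconstruction error --- is a sound and essentially standard way to get a constructive achievability bound, and it would in fact prove the existence claim on its own (making the Shannon step redundant, at the cost of a larger and less interpretable $V^*$). The trade-off is that your route needs an extra hypothesis on $P_{data}$ (finite second moment or bounded support) that the paper never states explicitly, even though its own Theorem~\ref{thm:rate_distortion} quietly assumes a \emph{bounded} distortion measure while then instantiating $d$ as unbounded MSE. In short: your proposal subsumes the paper's argument and is more honest about what must be verified; the paper's version is shorter but rests on treating single-window product quantization as if it were covered by the asymptotic rate--distortion achievability result.
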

\begin{proof}
By Theorem \ref{theorem:shannon}, for any \( D > 0 \), 
there exists a rate \( R(D) \) such that if the coding rate \( R \geq R(D) \), there exists a coding scheme achieving \( \mathbb{E}\left[\,d(X_{1:\tau}, \tilde{X}_{1:\tau})\,\right] \leq D \). 
In vector quantization, the codebook size \( V \) and discrete sequence length \( L \) determine the rate:
\begin{equation}
    R = \log_2 V^L = L \log_2 V.
    \label{eq:rate}
\end{equation}
Define \( V^* = \lceil2^{R(D)/L}\rceil \). When \( V \geq V^* \), we have:
\begin{equation}
    \begin{aligned}
        V \geq 2^{R(D)/L} &\implies L \log_2 V \geq R(D)\\
        &\implies R \geq R(D)
    \end{aligned}.
\end{equation}
Thus, there exists a coding scheme such that:
\begin{equation}
\mathbb{E}_{X_{1:\tau}\sim P_{data}, \tilde{X}_{1:\tau} \sim P_{\mathcal{E},\mathcal{D}}(X_{1:\tau}| \theta_\mathcal{E}, \theta_{\mathcal{D}})}[d(X_{1:\tau}, \tilde{X}_{1:\tau})] \le D.
\end{equation}
\end{proof}

According to Theorem \ref{thm:vq_control}, in the vector quantization method, the codebook size \( V \) directly controls the rate \( R = L \log_2 V \). Based on rate-distortion theory, there is a trade-off between the rate \( R \) and the distortion \( D \):
\begin{itemize}
    \item Increasing \( V \) directly raises the rate \( R \), which in turn allows for lower distortion.
    \item When \( R \geq R(D) \), the theory ensures the existence of a coding scheme that meets the target distortion \( D \).
\end{itemize}

In contrast, the rate-distortion behavior of continuous latent variable models, such as Variational Autoencoders (VAEs), is inherently uncontrollable. Specifically, VAEs are trained by maximizing the Evidence Lower Bound (ELBO):
\begin{equation}
    \mathcal{L}_{ELBO} = \mathbb{E}_{q_\phi(z|x)}[\log p_\theta(x|z)] - \beta D_{\mathrm{KL}}(q_\phi(z|x)\|p(z)),
\end{equation}
where the rate \( R \) is determined by the mutual information of the latent variable \( z \):
\begin{equation}
    I(X;Z) \approx \mathbb{E}_{p(x)}\left[D_{\mathrm{KL}}(q_\phi(z|x)\|p(z))\right].
\end{equation}
\begin{equation}
R_{vae} = \frac{I(X;Z)}{\ln 2}.
\end{equation}
Since \( q_\phi(z|x) \) is a continuous distribution, the mutual information \( I(X;Z) \) cannot be explicitly constrained and may be unstable (e.g., posterior collapse), thus failing to ensure \( R \geq R(D) \) because \( I(X;Z) \) may not reach the threshold. 

\subsection{Theoretical Analysis of Multi-scale Modeling}
\label{sec:theorem_MSM}
In Section \ref{sec:theorem_DTM}, we conclude that increasing the codebook size directly raises the rate, which in turn allows for lower distortion. In MSDformer, the multi-scale modeling, which involves learning multi-scale discrete token representations of time series, indirectly increases the codebook size. 
To compare the merits of indirectly increasing the codebook size by adding scales versus directly increasing the codebook size on a single scale, this part will conduct a comparative analysis from a theoretical perspective.

We assume the original discrete sequence length is \( L \) and the codebook size is \( V \). For a fair comparison, we assume a fixed increase in codebook size \( V' \).
In the setting of directly increasing the codebook size on a single scale, the rate \( R \) increases from Equation \eqref{eq:rate} to:
\begin{equation}
\begin{aligned}
    R_s &= \log_2 (V + V')^L\\
    &=L\log_2V\left(1+\frac{V'}{V}\right)\\
    &=L\log_2V+L\log_2\left(1+\frac{V'}{V}\right)
\end{aligned}.
\end{equation}
In the setting of increasing scale, we use \( V' \) as the codebook size for the new scale. Additionally, the length of the discrete token sequence increases by \( L' \) in the new scale. In this case, the rate \( R \) increases from Equation \eqref{eq:rate} to:
\begin{equation}
    \begin{aligned}
        R_m &= \log_2 \left( V^L{V'}^{L'} \right)\\
        &=L\log_2V+L'\log_2V'
    \end{aligned}.
\end{equation}
In our experimental setup, the ratios \( \frac{V'}{V} \) and \( \frac{L'}{L} \) are constrained to the interval \(\left[\frac{1}{4}, 4\right]\), and they share the same directional relationship (both are either $\geq1$ or $\leq 1$). Furthermore,  \( V \) and \( V' \) are set within the range \([2^7, 2^{10}]\). 
Under these conditions, it is observed that \( L \log_2 \left(1 + \frac{V'}{V}\right) < L' \log_2 V' \), which leads to the conclusion that \( R_m > R_s \) always holds true.
Therefore, in general, employing multi-scale modeling is more effective in enhancing the rate compared to directly increasing the codebook size on a single scale, thereby allowing for lower distortion. This theoretically validates the rationality of the MSDformer design.

\section{Experiments}
In this section, we begin by evaluating our proposed method through a comparative analysis with several state-of-the-art baseline methods on short-term and long-term time series generation tasks. Following this, we validate the efficiency advantages of DTM-based methods through efficiency analysis, and the superiority of key components and designs of MSDformer through ablation studies. Finally, we elucidate the operational mechanism of MSDformer in learning multi-scale temporal features through visualization analysis.

\subsection{Datasets and Metrics}

\noindent\textbf{Datasets.} We evaluate the proposed method on six benchmark datasets, including four real-world datasets (Stocks, ETT, Energy, and fMRI) and two simulated datasets (Sines and MuJoCo), which are popular introduced in the previous works~\cite{yoon2019time, zhicheng2024sdformer ,yuan2024diffusion}, and the details of datasets are shown in Appendix A. 

\noindent\textbf{Metrics.} To quantitatively evaluate the synthetic data, we employ three metrics: 1) The \textbf{Discriminative Score} assesses distributional similarity by measuring how well a classifier can distinguish between real and synthetic samples; 2) The \textbf{Predictive Score} evaluates the practical utility of synthetic data by evaluating the performance of models trained on synthetic samples when tested against real data; 3) The \textbf{Context-FID Score} quantifies sample fidelity through representation learning, specifically comparing the distribution of local temporal features between real and synthetic time series. The details of calculations are shown in Appendix A and all experiments report mean and standard deviation over 5 random seeds.

We employ two visualization techniques to assess the fidelity of synthetic data: 1) \textbf{t-SNE} \cite{van2008visualizing} for nonlinear dimensionality reduction, which projects both original and synthetic samples into a shared 2D embedding space to reveal structural similarities; and 2) \textbf{Kernel Density Estimation} (KDE) to generate comparative probability density plots, enabling a direct visual assessment of distributional alignment between real and synthetic data across feature dimensions.

\noindent\textbf{Implementation Details.} For Stage\textbf{1}: we set the codebook sizes and dimensions in $\{128, 256, 512\}$ and $K$ with $\{2, 4, 8\}$ across all datasets. For Stage\textbf{2}: we use the decoder-only Transformer with 2 layers, and set $\epsilon$ within $\{0.1, 0.3\}$. All experiments are conducted on a single Nvidia V-100 GPU with the AdamW\cite{loshchilov2017decoupled} optimizer. More details of our model and configurations are put in Appendix B, duo to page limits.

\subsection{Short-term Time Series Generation}
\label{sec:short}
\begin{table*}[t]
\footnotesize
\relsize{-0.5}
\renewcommand{\arraystretch}{1.3}
\centering
\caption{Results for all methods on all datasets are averaged over five runs (mean ±.standard deviation).}
\resizebox{1.0\linewidth}{!}{
    \begin{tabular}{c|c|c|c|c|c|c|c}
    \hline
    Metrics &Methods &Sines &Stocks &ETTh &MuJoCo &Energy &fMRI\\ 
    \hline
    \multirow{7}{*}{\parbox{1.5cm}{\centering Discriminative\\Score$\downarrow$}}
    &MSDformer &\textbf{0.004±.002} &\textbf{0.005±.004} &\textbf{0.003±.002} &\textbf{0.004±.002} &\textbf{0.005±.003} &\textbf{0.005±.004}\\
    &SDformer &\underline{0.006±.004} &\underline{0.010±.006} &\textbf{0.003±.001} &\underline{0.008±.005} &\underline{0.006±.004} &\underline{0.017±.007} \\ 
    
    &Diffusion-TS &\underline{0.006±.007} &0.067±.015 &\underline{0.061±.009} &\underline{0.008±.002} &0.122±.003 &0.167±.023\\
    &TimeGAN &0.011±.008 &0.102±.021 &0.114±.055 &0.238±.068 &0.236±.012 &0.484±.042 \\
    &TimeVAE &0.041±.044 &0.145±.120 &0.209±.058 &0.230±.102 &0.499±.000 &0.476±.044\\
    &Diffwave &0.017±.008 &0.232±.061 &0.190±.008 &0.203±.096 &0.493±.004 &0.402±.029\\
    &DiffTime &0.013±.006 &0.097±.016 &0.100±.007 &0.154±.045 &0.445±.004 &0.245±.051\\
    &Cot-GAN &0.254±.137 &0.230±.016 &0.325±.099 &0.426±.022 &0.498±.002 &0.492±.018\\
    \hline
    \multirow{8}{*}{\parbox{1.5cm}{\centering Predictive\\Score$\downarrow$}}
    &MSDformer &\textbf{0.093±.000} &\underline{0.037±.000} &\textbf{0.118±.002} &\textbf{0.007±.001} &\textbf{0.249±.000} &\textbf{0.089±.003}\\
    &SDformer &\textbf{0.093±.000} &\underline{0.037±.000} &\textbf{0.118±.002}  &\textbf{0.007±.001} &\textbf{0.249±.000} &\underline{0.091±.002}\\
    &Diffusion-TS &\textbf{0.093±.000} &\textbf{0.036±.000} &\underline{0.119±.002} &\textbf{0.007±.000} &\underline{0.250±.000} &{0.099±.000}\\
    &TimeGAN &\textbf{0.093±.019} &0.038±.001 &0.124±.001 &0.025±.003 &0.273±.004 &0.126±.002\\
    &TimeVAE &\textbf{0.093±.000} &0.039±.000 &0.126±.004 &0.012±.002 &0.292±.000 &0.113±.003\\
    &Diffwave &\textbf{0.093±.000} &0.047±.000 &0.130±.001 &0.013±.000 &0.251±.000 &0.101±.000\\
    &DiffTime &\textbf{0.093±.000} &0.038±.001 &0.121±.004 &\underline{0.010±.001} &0.252±.000 &0.100±.000\\
    &Cot-GAN &\underline{0.100±.000} &0.047±.001 &0.129±.000 &0.068±.009 &0.259±.000 &0.185±.003\\
    \cline{2-8}
    &Original &0.094±.001 &0.036±.001 &0.121±.005 &0.007±.001 &0.250±.003 &0.090±.001\\
    \hline
    \multirow{7}{*}{\parbox{1.5cm}{\centering Context-FID\\Score$\downarrow$}} 
    &MSDformer &\textbf{0.001±.000} &\textbf{0.002±.000} &\textbf{0.002±.000} &\underline{0.006±.001} &\textbf{0.003±.000} &\textbf{0.009±.001}\\
    &SDformer &\textbf{0.001±.000} &\textbf{0.002±.000} &\underline{0.008±.001} &\textbf{0.005±.001} &\textbf{0.003±.000} &\underline{0.015±.001}\\
    &Diffusion-TS &\underline{0.006±.000} &0.147±.025 &0.116±.010 &{0.013±.001} &\underline{0.089±.024} &0.105±.006\\
    &TimeGAN &0.101±.014 &\underline{0.103±.013} &0.300±.013 &0.563±.052 &0.767±.103 &1.292±.218 \\
    &TimeVAE &0.307±.060 &0.215±.035 &0.805±.186 &0.251±.015 &1.631±.142 &14.449±.969\\
    &Diffwave &0.014±.002 &0.232±.032 &0.873±.061 &0.393±.041 &1.031±.131 &0.244±.018\\
    &DiffTime &\underline{0.006±.001} &0.236±.074 &0.299±.044 &0.188±.028 &0.279±.045 &0.340±.015\\
    &Cot-GAN &1.337±.068 &0.408±.086 &0.980±.071 &1.094±.079 &1.039±.028 &7.813±.550\\
    \hline
    \end{tabular}%
    }
\label{tab:experiments}
\end{table*}
\begin{table*}[t]
\footnotesize
\relsize{-0.5}
\renewcommand{\arraystretch}{1.3}
\centering
\caption{Results of long-term time series generation, the performances are achieved based on 5 runs.}
\resizebox{1.0\linewidth}{!}{
    \begin{tabular}{c|c|ccc|ccc}
    \hline
    &&&ETTh&&&Energy\\
    \hline
    Metrics &Methods &64 &128 &256&64 &128 &256\\
    \hline
    \multirow{7}{*}{\parbox{1.5cm}{\centering Discriminative\\Score$\downarrow$}} 
    &MSDformer &\textbf{0.013±.012} &\textbf{0.004±.003} &\textbf{0.007±.005} &\textbf{0.009±.006} &\textbf{0.007±.006} &\textbf{0.010±.009}\\
    &SDformer&\underline{0.018±.007} &\underline{0.013±.005} &\underline{0.008±.006} &\underline{0.010±.007} &\underline{0.013±.007} &\underline{0.017±.003} \\
    &Diffusion-TS &0.106±.048 &0.144±.060 &0.060±.030 &0.078±.021 &0.143±.075 &0.290±.123 \\
    &TimeGAN &0.227±.078 &0.188±.074 &0.442±.056 &0.498±.001 &0.499±.001 &0.499±.000 \\
    &TimeVAE &0.171±.142 &0.154±.087 &0.178±.076 &0.499±.000 &0.499±.000 &0.499±.000\\
    &Diffwave &0.254±.074 &0.274±.047 &0.304±.068 &0.497±.004 &0.499±.001 &0.499±.000\\
    &DiffTime &0.150±.003 &0.176±.015 &0.243±.005 &0.328±.031 &0.396±.024 &0.437±.095\\
    &Cot-GAN &0.296±.348 &0.451±.080 &0.461±.010 &0.499±.001 &0.499±.001 &0.498±.004\\
    \hline
    \multirow{8}{*}{\parbox{1.5cm}{\centering Predictive\\Score$\downarrow$}} 
    &MSDformer &\textbf{0.112±.008} &\textbf{0.110±.009} &0.110±.005 &\textbf{0.247±.001} &\textbf{0.244±.001} &\textbf{0.243±.003} \\
    &SDformer&\underline{0.116±.006} &\textbf{0.110±.007} &\textbf{0.095±.003} &\textbf{0.247±.001} &\textbf{0.244±.000} &\textbf{0.243±.002}\\
    &Diffusion-TS &\underline{0.116±.000} &\textbf{0.110±.003} &\underline{0.109±.013} &\underline{0.249±.000} &\underline{0.247±.001} &\underline{0.245±.001} \\
    &TimeGAN &0.132±.008 &0.153±.014 &0.220±.008 &0.291±.003 &0.303±.002 &0.351±.004\\
    &TimeVAE &{0.118±.004} &\underline{0.113±.005} &0.110±.027 &0.302±.001 &0.318±.000 &0.353±.003\\
    &Diffwave &0.133±.008 &0.129±.003 &0.132±.001 &0.252±.001 &0.252±.000 &0.251±.000\\
    &DiffTime &{0.118±.004} &0.120±.008 &0.118±.003 &0.252±.000 &0.251.±.000 &0.251±.000\\
    &Cot-GAN &0.135±.003 &0.126±.001 &0.129±.000 &0.262±.002 &0.269±.002 &0.275±.004\\
    \cline{2-8}
    &Original &0.114±.006 &0.108±.005 &0.106±.010 &0.245±.002 &0.243±.000 &0.243±.000\\
    \hline

    \multirow{7}{*}{\parbox{1.5cm}{\centering Context-FID\\Score$\downarrow$}} 
    &MSDformer &\textbf{0.015±.002} &\textbf{0.023±.002} &\textbf{0.018±.002} &\textbf{0.027±.003} &\textbf{0.034±.004} &\textbf{0.032±.002}\\
    &SDformer&\underline{0.018±.003} &\underline{0.024±.001} &\underline{0.021±.001} &\underline{0.031±.002} &\underline{0.036±.002} &\underline{0.041±.003}\\
    &Diffusion-TS &0.631±.058 &0.787±.062 &0.423±.038 &{0.135±.017} &{0.087±.019} &{0.126±.024}\\
    &TimeGAN &1.130±.102 &1.553±.169 &5.872±.208 &1.230±.070 &2.535±.372 &5.032±.831 \\
    &TimeVAE &0.827±.146 &1.062±.134 &0.826±.093 &2.662±.087 &3.125±.106 &3.768±.998\\
    &Diffwave &1.543±.153 &2.354±.170 &2.899±.289 &2.697±.418 &5.552±.528 &5.572±.584\\
    &DiffTime &1.279±.083 &2.554±.318 &3.524±.830 &0.762±.157 &1.344±.131 &4.735±.729\\
    &Cot-GAN &3.008±.277 &2.639±.427 &4.075±.894 &1.824±.144 &1.822±.271 &2.533±.467\\
    \hline
    \end{tabular}
}
\label{tab:long}
\end{table*}

Table \ref{tab:experiments} summarizes the performance of each method across all datasets. 
The results in bold indicate the best performance, while the results with an underline indicate the second-best performance. These results allow us to draw several key observations.

The results in Table \ref{tab:experiments} demonstrate that both MSDformer and SDformer, consistently outperform other methods based on GAN, VAE, and DDPMs. Specifically, they achieve an average improvement of over 60\% in Discriminative Score and over 80\% in Context-FID Score, highlighting the effectiveness of DTM for time series generation tasks. 
Furthermore, MSDformer demonstrates additional improvements over SDformer, largely due to its ability to learn multi-scale patterns in time series data, which highlights the benefits of multi-scale modeling over single-scale modeling.

Fig.~\ref{fig:vis_tsne} and Fig.~\ref{fig:vis_kernel567} of the Appendix show the comparative visualization results of MSDformer, SDformer, and Diffusion-TS across all datasets. These visualizations depict the distribution of synthesized and real time series data  in a two-dimensional space and their probability density distributions. The results indicate that the data generated by MSDformer and SDformer closely resemble real data, highlighting the superior performance of DTM-based methods in time series generation tasks.
Furthermore, through a detailed comparison, it can still be observed that the performance of MSDformer is better than that of SDformer. For example, in the Stock dataset, when the x-axis value is around 0.7, the kernel density estimation of the time series data generated by SDformer has a slight deviation from the kernel density estimation of the real data, while MSDformer is significantly closer to the real data at this position. This further illustrates the advantages of multi-scale modeling.

\subsection{Long-term Time Series Generation}
\label{sec:long}
To thoroughly evaluate our method's effectiveness in generating longer time series, we conduct a comprehensive comparison of various models on long-term time series generation tasks, as detailed in Table \ref{tab:long}. Results show that many methods face significant distortions in longer sequences, with Discriminative Scores nearing 0.5, especially in the Energy dataset. This highlights the increased difficulty of long sequence generation compared to short sequences.

Despite this, SDformer and MSDformer maintain high performance, thanks to DTM's strong ability to learn high-quality discrete representations and its robust sequence modeling capabilities. This validates the significant advantages of DTM-based methods in handling longer sequences. 
Furthermore, MSDformer outperforms SDformer, with an average improvement of 34.5\% in Discriminative Score and 12.6\% in Context-FID Score. This is because long sequences often contain more pronounced multi-scale patterns, and MSDformer has a stronger ability to model these patterns.



\subsection{Efficiency Analysis}
\begin{table}[t]
\centering
\caption{Comparison of inference time (per 1024 samples)  across methods and datasets.}
\renewcommand\arraystretch{1.3}
\begin{tabular}{c|ccccccc}
\hline
Methods &Sines &Stocks &ETTh &MuJoCo &Energy &fMRI  \\ 
\hline
Diffusion-TS &9.57	&10.12	&11.85	&19.34	&34.92	&45.90\\
\rowcolor{gray!25}
SDformer &0.15	&0.18	&0.40	&0.15	&0.15	&0.44\\
\rowcolor{gray!25}
MSDformer &0.95	&0.96 &0.30	&0.97	&0.96	&1.40\\
\hline
\end{tabular}
\label{tab:time}
\end{table}
To validate the practical efficiency of different methods, we compare their inference speeds, as shown in Table \ref{tab:time}. As can be seen from the table, DTM-based methods, namely SDformer and MSDformer, significantly outperform DDPMs-based methods such as Diffusion-TS in terms of inference speed, with a difference of more than 10 times. This is because DDPMs-based methods require multiple denoising steps, leading to longer inference times. Additionally, MSDformer requires more inference time than SDformer. This is because multi-scale modeling, while learning higher-quality discrete representations of time series, also increases the length of the discrete token sequence, thereby increasing the time required for autoregressive generation.
Despite the moderate speed cost, MSDformer's significant quality gains in both short-term and long-term generation (as shown in Section \ref{sec:short} and Section \ref{sec:long}) justify this trade-off for most real-world applications. The choice between SDformer and MSDformer can be adapted to specific latency-vs-quality requirements. 
\subsection{Ablation Studies}
To understand the contribution of each design or component to proposed method, we conduct ablation experiments for six aspects:
1) the effect of increasing codebook size vs. number of scales;
2) the effect of different scale configuration; 
3) the effect of similarity-driven vector quantization;
4) the effect of exponential moving average for codebook update;
5) the effect of codebook reset;
6) the effect of token type encoding.

\textbf{Effect of Increasing Codebook Size vs. Number of Scales.}
Section \ref{sec:theorem_MSM} demonstrates that increasing the number of scales allows for lower distortion compared to increasing the codebook size. Furthermore, we provide an experimental performance comparison of these approaches under different length settings on the fMRI dataset, as shown in Table \ref{tab:codebook_expansion}. 
From the comparison, it is evident that using SDformer with a codebook size of \( V=512 \) as the baseline, increasing the codebook size to \( V=640 \) results in performance improvements. However, the performance enhancement is more pronounced and stable when increasing the number of scales (e.g., \( V^{(1:K)}=[128,512] \), i.e., MSDformer). These findings align with the theoretical analysis in Section \ref{sec:theorem_MSM}, collectively validating the advantages of multi-scale modeling.

\begin{table}[t]
\centering
\caption{Performance comparison of increasing the single-scale codebook size vs. number of scales under varying length settings. }
\renewcommand\arraystretch{1.3}
\resizebox{1.0\linewidth}{!}{
\begin{tabular}{c|c|ccc}

\hline
$\tau$&$V^{(1:K)}$
&\begin{tabular}[c]{@{}c@{}}Discriminative \\ Score$\downarrow$\end{tabular} &\begin{tabular}[c]{@{}c@{}}Predictive \\ Score$\downarrow$\end{tabular} &\begin{tabular}[c]{@{}c@{}}Context-FID \\ Score$\downarrow$\end{tabular}\\ 
\hline 
\multirow{3}{*}{\parbox{0.5cm}{\centering $24$}} 
&$[512]$ &0.017±.007 &0.091±.002 &0.015±.001\\
&$[640]$ &0.017±.008 &0.091±.003 &0.013±.000\\
&$[128,512]$ &\bf0.005±.003 &\bf 0.090±.002 &\bf 0.009±.001\\
\hline
\multirow{3}{*}{\parbox{0.5cm}{\centering $64$}} 
&$[512]$ &0.013±.011 &\bf 0.083±.003 &0.046±.003\\
&$[640]$ &0.011±.005 &0.085±.003 &0.044±.003\\
&$[128,512]$ &\bf 0.009±.005 &\bf0.083±.002 &\bf 0.041±.003 \\
\hline
\end{tabular}}
\label{tab:codebook_expansion}
\end{table}


\textbf{Effect of Different Scale Configuration.}
Our method's key innovation over SDformer is the integration of multi-scale modeling. This part analyzes the impact of multi-scale modeling and different scale sizes. Table \ref{tab:scale} summarizes MSDformer's performance on the fMRI dataset under various scale configurations, highlighting several key points:

Under single-scale settings ($K=1$), optimal temporal downsampling factors vary across  datasets. For instance, the fMRI dataset performs best with a downsampling factor of $r=2$, while Stocks is optimal at $r=4$. This reflects the differing importance of information at various scales across datasets. Excessively large downsampling factors, like $r=8$, consistently degrade performance, as seen in the fMRI and Stocks datasets.
These factors result in significant performance degradation in both Discriminative Score and Context-FID Score compared to the optimal single-scale configuration.

Multi-scale modeling significantly boosts performance when properly configured. For fMRI dataser, all multi-scale setups ($K\geq2$) outperform single-scale baselines, particularly in Discriminative Score and Context-FID Score. The configuration $r^{(1:K)}=[2,8]$ excels among single and dual-scale cases, balancing local and global features. The triple-scale setup $r^{(1:K)}=[2,4,8]$ offers stable, robust performance across metrics, confirming the utility of each scale.

For Stocks dataset, only the Discriminative Score consistently improves over single-scale configurations in dual-scale setups, while Context-FID Score does not. The best performance is achieved with $r^{(1:K)}=[2,4]$, as it integrates useful information from both scales. Including $r=8$ does not enhance performance, likely due to limited large-scale information in the dataset.

Thus, increasing the number of scales does not inherently improve performance; it depends on the dataset and data characteristics. Positive gains are only achieved by adding scales that contain useful information.

\begin{table*}[t]
\centering
\caption{Ablation Study on the Impact of Multi-Scale Information and Scale Sizes. }
\renewcommand\arraystretch{1.3}
\resizebox{1.0\linewidth}{!}{
\begin{tabular}{c|c|ccc|ccc}
\hline
    &&&fRMI&&&Stocks\\
\hline
&$r^{(1:K)}$
&\begin{tabular}[c]{@{}c@{}}Discriminative \\ Score$\downarrow$\end{tabular} &\begin{tabular}[c]{@{}c@{}}Predictive \\ Score$\downarrow$\end{tabular} &\begin{tabular}[c]{@{}c@{}}Context-FID \\ Score$\downarrow$\end{tabular} &\begin{tabular}[c]{@{}c@{}}Discriminative \\ Score$\downarrow$\end{tabular} &\begin{tabular}[c]{@{}c@{}}Predictive \\ Score$\downarrow$\end{tabular} &\begin{tabular}[c]{@{}c@{}}Context-FID \\ Score$\downarrow$\end{tabular}\\ 
\hline
\multirow{3}{*}{\parbox{1.5cm}{\centering $K=1$}} 
&$[2]$ &0.017±.007 &0.091±.002 &0.015±.001 & 0.013±.006 & 0.037±.000 & 0.003±.001\\
&$[4]$ &0.018±.007 &0.092±.002 &0.011±.001 & 0.010±.006 & 0.037±.000 & \textbf{0.002±.000}\\
&$[8]$ &0.032±.012 &0.093±.002 &0.092±.008 & 0.017±.007 & 0.037±.000 & 0.004±.001\\
\hline
\multirow{3}{*}{\parbox{1.5cm}{\centering $K=2$}} 
&$[2,4]$ &\textbf{0.005±.003} &0.091±.002 &\textbf{0.009±.001} & 0.005±.004 & 0.037±.000 & \textbf{0.002±.000}\\
&$[2,8]$ &\textbf{0.005±.003} &0.090±.002 &\textbf{0.009±.000}  & 0.007±.004 & 0.037±.000 & \textbf{0.002±.000}\\
&$[4,8]$ &0.010±.007 &0.092±.002 &0.010±.000  &\textbf{0.004±.004} & 0.037±.000 & 0.003±.000\\
\hline
\multirow{1}{*}{\parbox{1.5cm}{\centering $K=3$}} 
&$[2,4,8]$ &\textbf{0.005±.004} &\textbf{0.089±.003} &\textbf{0.009±.001} & 0.012±.007 & 0.037±.001 & \textbf{0.002±.000}\\
\hline
\end{tabular}}
\label{tab:scale}
\end{table*}

\textbf{Effect of Similarity-Driven Vector Quantization.}
Vector quantization critically impacts discrete token learning in time series modeling. 
Following the finding that similarity-driven strategy outperforms distance-based strategy in time series generation tasks  \cite{zhicheng2024sdformer}, we investigate whether MSDformer retains this advantage by replacing its similarity-driven vector quantization with distance-based variant (``w/o similarit'' in Table \ref{tab:ablation}). Results show that MSDformer with similarity-driven vector quantization consistently surpasses distance-based counterparts across all metrics, aligning with SDformer's finding. this superiority stems from the enhanced capacity of similarity metrics to capture directional features (e.g., trends, periodicity), yielding higher-quality discrete representations that better preserve temporal patterns during modeling.

\textbf{Effect of Exponential Moving Average for Codebook Update.}
MSDformer employs exponential moving average (EMA) instead of standard gradient-based codebook updates in VQ-VAE \cite{van2017neural}. To validate this design, we compare EMA with conventional methods (``w/o EM'' in Table \ref{tab:ablation} denotes gradient-updated variant). Results reveal consistent performance degradation across metrics when using gradient updates, confirming EMA's critical role. The EMA strategy ensures stable codebook vector updates while preserving long-term temporal patterns through momentum-based optimization, thereby enhancing model stability and representation fidelity.

\textbf{Effect of Codebook Reset.}
MSDformer's codebook reset mechanism reactivates underused vectors to prevent codebook collapse. Ablation experiments  show disabling this technique (``w/o Reset'' in Table \ref{tab:ablation}) causes severe performance degradation: Discriminative Score rises from 0.005 to 0.423 and Context-FID increases from 0.009 to 14.664. Codebook utilization plummets to 1.1\% without reset, indicating most vectors remain inactive during training. This collapse directly limits representational capacity, confirming the reset's critical role in maintaining codebook diversity through dynamic vector reactivation.

\textbf{Effect of Token Type Encoding.}
MSDformer addresses multi-scale feature confusion through Token Type Encoding (TTE). Ablation experiments reveal disabling TTE (``w/o TTE'' in Table~\ref{tab:ablation}) causes 40\% Discriminative Score degradation and 22.2\% Context-FID increase, confirming feature entanglement occurs without explicit scale differentiation. This demonstrates TTE's critical role in maintaining scale-specific feature spaces during multi-scale temporal modeling. By explicitly indicating the scale ID of Eqn~\ref{eq:ms_embedding}, TTE provides a lightweight yet essential inductive bias for multi-scale discrete time-series modeling, which enables our MSDformer to achieve better generation performance.
\begin{table}[t]
\centering
\caption{Results of ablation study with the different modeling strategies of vector quantization in stage 1.}
\renewcommand\arraystretch{1.3}
\tabcolsep=0.12cm
\begin{tabular}{c|cccc}
\toprule 
Methods&\begin{tabular}[c]{@{}c@{}}Discriminative \\ Score$\downarrow$\end{tabular} &\begin{tabular}[c]{@{}c@{}}Predictive \\ Score$\downarrow$\end{tabular} &\begin{tabular}[c]{@{}c@{}}Context-FID \\ Score$\downarrow$\end{tabular}
&\begin{tabular}[c]{@{}c@{}}Codebook \\ Usage$\downarrow$\end{tabular}\\ 
\midrule
 MSDformer &0.005±.004 &0.089±.003 &0.009±.001 &100\%\\
w/o similarity &0.006±.004 &0.092±.002 &0.019±.001 &100\%\\
w/o EMA &0.007±.004 &0.091±.001 &0.011±.000 &100\%\\
w/o Reset &0.423±.099 &0.103±.001 &14.664±1.413 &1.1\%\\
w/o TTE&0.007±.006 &0.090±.002 &0.011±.001 &100\%\\
\bottomrule 
\end{tabular}
\label{tab:ablation}
\end{table}

\subsection{Visual Analysis}
\begin{figure}[t!]
\begin{center}
\includegraphics[width=1\linewidth]{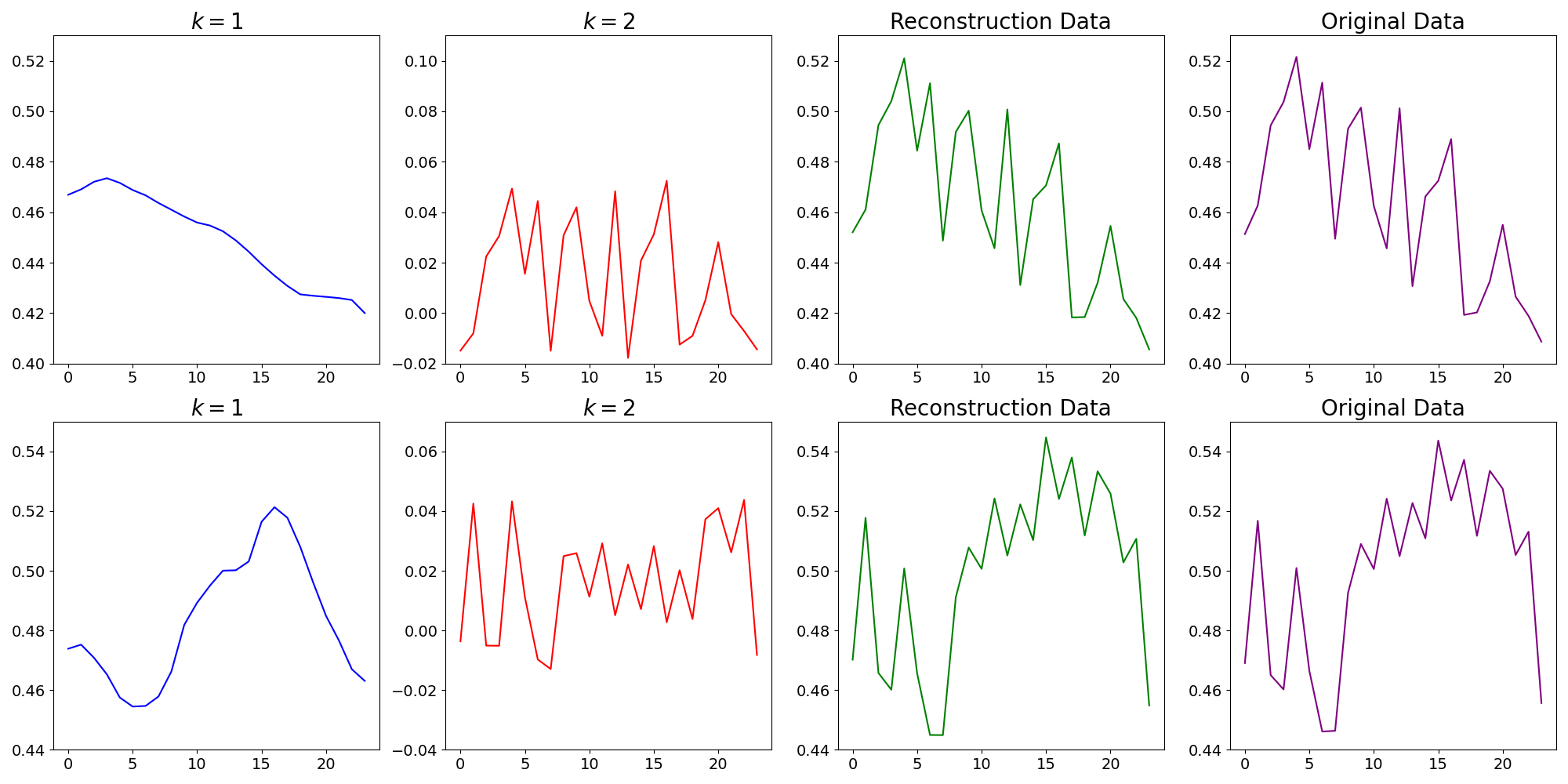}
\end{center}
\caption{Multi-scale time-series visualization for ($K=2$). The first two columns show hierarchical reconstructions at the coarse (($K=1$)) and fine ($K=2$) scales, respectively. The last two columns present the final reconstruction and the original series for comparison.}
\label{fig:vis_ms}
\end{figure}

The experimental results demonstrate that incorporating multi-scale information and establishing multi-scale patterns modeling in the DTM-based time series generation framework lead to significant performance improvements. To elucidate the operational mechanism of MSDformer in learning multi-scale temporal features, we conduct a dedicated analysis of the relationship between the original time series and the multi-scale token sequences generated by the multi-scale time series tokenizer.

Specifically, we conduct a controlled experiment using the Energy dataset with $K=2$, corresponding to two distinct scales. After the model converges, component-wise decoding is performed by feeding each scale-specific token sequence into its corresponding decoder module. The reconstructed results for each scale, along with the complete reconstruction and the original time series, are visualized and compared, as shown in Fig. \ref{fig:vis_ms}.

As illustrated in Fig. \ref{fig:vis_ms}, the following observations can be made:
Firstly, the coarse-scale decoder effectively reconstructs the global trend trajectories and periodic structures, capturing the overall shape and long-term dynamics of the time series.
Secondly, the fine-scale decoder focuses on capturing high-frequency signal components and local pattern details, thereby adding finer textures and short-term fluctuations to the reconstruction.
Finally, the integrated reconstruction combines both resolution levels, merging the global trends and local details to achieve optimal fidelity. This combined approach ensures that the reconstructed time series closely matches the original data in both structure and detail.

\section{Conclusion}
This paper introduces a novel DTM-based method, MSDformer, which extends SDformer by incorporating multi-scale modeling, thereby significantly enhancing its ability to handle multi-scale patterns. Additionally, the superiority of the DTM technique in time series generation tasks is demonstrated through the lens of rate-distortion theory. We also provide a theoretical analysis of the advantages of multi-scale modeling.
Furthermore, experimental results reveal that DTM-based methods, specifically MSDformer and SDformer, consistently outperform continuous models such as GANs, VAEs, and DDPMs in time series generation (TSG) task. The experiments also highlight the advantages of multi-scale modeling, with MSDformer showing notable improvements over SDformer. Notably, MSDformer exhibits particularly clear gains on long-term generation task, where modeling long-range dependencies and cross-scale structures becomes more challenging. These results indicate that the proposed multi-scale discrete token hierarchy better preserves global dynamics while refining local variations over extended generation length.

Looking ahead, future work could focus on enhancing scalability by exploring adaptive multi-scale structures that dynamically adjust the number of scales and downsampling rates across temporal resolutions. We also plan to explore a unified discrete tokenization strategy for cross-domain generation, enabling a single model to learn transferable token representations across diverse time-series domains and to adapt efficiently to unseen domains. In addition, the framework could be extended to more complex scenarios, such as spatiotemporal data generation and multivariate conditional generation, to further investigate its cross-modal potential. This study not only advances time-series generation but also provides insights for other temporal analysis tasks, such as forecasting and anomaly detection.

\bibliographystyle{IEEEtran}
\bibliography{references.bib} 

%

\newpage

\appendices
\setcounter{section}{0}
\section*{\centering \Large \textbf{Supplementary Materials for
MSDformer}}
In the supplementary, we provide more implementation details of our SDformer and MSDformer. We organize our supplementary as follows:
\begin{itemize}
\item In Appendix \ref{appendix:dataset}, we give the detailed description of used datasets and the metrics.
\item In Appendix \ref{appendix:hyperpar}, we provide the experiment settings.
\item In Appendix \ref{appendix:baselines}, we briefly introduce the baselines.
\item In Appendix \ref{appendix:ab}, we show more experimental results to verify the effect of increasing codebook size vs. number of scales.
\item In Appendix \ref{appendix:model_details}, we provide the details operations of the architecture in stage 1 and 2.
\item In Appendix \ref{appendix:alg}, we  provide the algorithms of training in SDformer and MSDformer.
\end{itemize}
\section{Dataset and Metric Details}
\label{appendix:dataset}
\noindent\textbf{Dataset.} We conduct extensive evaluations of the proposed MSD framework on six widely-used benchmark datasets for multivariate time-series generation task. For real-world datasets, 1) \textbf{Stocks}\footnote{\url{https://finance.yahoo.com/quote/GOOG/history/?p=GOOG}} dataset consists of Google's stock price data between 2004 and 2019, with each observation representing a day and containing 6 features, including as features the volume and high, low, opening, closing, and adjusted closing price. 2) \textbf{ETTh}\footnote{\url{https://github.com/zhouhaoyi/ETDataset}}~\cite{zhou2021informer} dataset includes data obtained from electrical transformers, encompassing load and oil temperature measurements taken every 15 minutes from July 2016 to July 2018. 3) \textbf{Energy}\footnote{\url{https://archive.ics.uci.edu/dataset/374/appliances+energy+prediction}} dataset is a multivariate appliance energy prediction dataset~\cite{candanedo2017data}, featuring 28 channels with correlated features. It exhibits noisy periodicity and contains average 10-minute continuous-valued measurements typical of real-world energy consumption data. 4) \textbf{fMRI}\footnote{\url{https://www.fmrib.ox.ac.uk/datasets/netsim/}} dataset serves as a benchmark for causal discovery and features realistic simulations of blood-oxygen-level-dependent (BOLD) time series; we chose a simulation with 50 features from the original dataset referring to \cite{yuan2024diffusion}. 

As for two simulated datasets, 5) \textbf{Sines}\footnote{\url{https://github.com/jsyoon0823/TimeGAN}} is to simulate multivariate sinusoidal sequences of different frequencies $\eta$ and phases $\theta$ providing continuous-valued, periodic, multivariate data where each feature is independent of others. For each dimension $i \in \{1,...,5\}, x_{i}(t)= \text{sin}(2\pi\eta t+ \theta)$, where $\eta \sim \mathcal{U}[0, 1]$ and $\theta \sim \mathcal{U}[-\pi, \pi]$. 6) \textbf{MuJoCo}\footnote{\url{https://github.com/google-deepmind/dm_control}}~\cite{todorov2012mujoco, rubanova2019latent} dataset is a multivariate physics simulation time series dataset, which  collects a total of 10,000 simulations of the ``Hopper'' model from the DeepMind Control Suite and MuJoCo simulator. The position of the body in 2D space is uniformly sampled from the interval $[0,0.5]$. The relative position of the limbs is sampled from the range $[-2,2]$, and initial velocities are sampled from the interval $[-5,5]$. In all, there are 10000 sequences of 100 regularly sampled time points with a feature dimension of 14.

\noindent\textbf{Quantitative Metrics.} To be specific, 

1). \textbf{Discriminative Score} measures the distributional similarity between original and synthesized time series data. A binary classifier (e.g., RNNs-based) is trained to distinguish original data (labeled as ``real'') from synthesized data (labeled as ``synthetic''), followed by evaluating its classification accuracy on a test set. The score is derived as the absolute difference between the classifier's accuracy and the random-guessing baseline (0.5), where a value closer to 0 indicates higher similarity between the two datasets. For a fair comparison, we follow the standard TimeGAN and Diffusion-ts~\cite{yoon2019time, yuan2024diffusion} style post-hoc evaluation protocol: we train an off-the-shelf RNN-based binary classifier to discriminate real sequences from generated sequences on a held-out test set, and compute the discriminative score as $|\text{Acc}-0.5|$ (lower is better). And the classifier is a GRU-based sequence classifier consisting of a GRU structure followed by a fully-connected sigmoid head. Specifically, given the input sequences $x\in\mathbb{R}^{T\times d}$ (Ground truths and generation results), where $T$ is the generation length and $d$ is the dimension of dataset, we run a GRU model and map it to a scalar logit via a linear layer; the probability is obtained by a sigmoid. Also, within the training setup, we construct a balanced dataset by labeling each original sequence as real (1) and each generated sequence as fake (0). The classifier is trained using binary cross-entropy with the Adam optimizer for 2000 iterations and batch size 128, where each mini-batch contains samples from both real and generated training sets, which follows the experimental settings of Diffusion-ts\footnote{\url{https://github.com/Y-debug-sys/Diffusion-TS/tree/main}}. We evaluate accuracy on the held-out test split and report the discriminative score.

2). \textbf{Predictive Score} evaluates the temporal fidelity of synthesized data by training a post-hoc one-step-ahead sequence predictor (e.g., RNNs-based) on synthetic data to predict future time steps and testing its generalization on the original dataset. The score is quantified via the model’s mean absolute error (MAE) on real test samples, where lower values indicate better preservation of the original data’s temporal dynamics and we also follow the previous works's experimental settings for fair evaluation. Concretely, given a input sequence $x_{1:T}\in\mathbb{R}^{T\times d}$, the predictor takes the prefix $x_{1:T-1}$ and predicts the next-step sequence $x_{2:T}$. In our implementation, the input context length is (T-1) time steps (all but the last step of each window), and the prediction target is also (T-1) steps, corresponding to a one-step-shifted forecasting objective. Thus, the predictive score does not evaluate long-horizon forecasting; it evaluates one-step-ahead prediction across the entire window following TimeGAN’s post-hoc predictive score protocol. The predictor is trained with L1 loss (MAE) using Adam for 5000 iterations with batch size 128 and the structure is aslo a GRU-based RNN model. Each training iteration samples a mini-batch of synthetic sequences, constructs input–target pairs ($x_{1:T-1}, x_{2:T}$), and performs one optimizer step. After training, we evaluate the trained predictor on real sequences and average MAE over all sequences to obtain the predictive score. Overall, it trains a one-step-ahead predictor (shift-by-one) on synthetic data and reports the MAE on real data over all time steps in the generation window.

3). \textbf{Context-FID Score} evaluates the distributional fidelity of synthesized time series by comparing contextual feature embeddings between original and synthetic datasets. A self-supervised embedding network (e.g., trained via temporal contrastive learning) first extracts context-aware representations from both datasets. The Fréchet distance is then computed between the multivariate Gaussian distributions of these embeddings, characterized by their means and covariance matrices. And the Context-FID Score is to replace the Inception model of the original Frechet Inception distance (FID)~\cite{jeha2022psa} with a 
time series representation learning method called TS2Vec~\cite{yue2022ts2vec}. Lower scores indicate closer alignment in contextual patterns between real and synthetic data. Specifically, we first sample synthetic time series and real-time series respectively. Then we compute the FID score of the representation after encoding them with a pre-trained TS2Vec model across each dataset, and each embedding is computed from the entire generation sequence, not from a randomly selected sub-range. Details are shown in Fig.~\ref{fig:ts2vec-dataflow-simple}.

\noindent\textbf{Qualitative Metric}. The \textbf{t-SNE} \cite{van2008visualizing} method projects high-dimensional time series data into 2D space to reveal underlying structural patterns. It first computes pairwise similarities in the original space using adaptive Gaussian kernels (guided by perplexity parameters), then optimizes a low-dimensional t-distribution representation through KL divergence minimization. The final visualization preserves neighborhood relationships while mitigating crowding effects, enabling intuitive comparison of real and synthetic data distributions.
The \textbf{Kernel Density Estimation} (KDE) evaluates distribution alignment by estimating probability densities through Gaussian kernel smoothing across multiple dimensions. Using bandwidth-optimized kernels, it generates continuous density profiles for both datasets. Visual comparison of these smoothed distributions reveals preservation of key statistical patterns in synthetic data, with overlapping contours indicating higher fidelity.

\begin{figure}[h]
    \centering
    \setlength{\tabcolsep}{6pt}
    \renewcommand{\arraystretch}{1.25}
    \begin{tabular}{>{\raggedright\arraybackslash}p{0.93\linewidth}}
    \hline
    \textbf{Input:} $x\in\mathbb{R}^{B\times T\times D}$ \\[2pt]
    $\Downarrow$ \\
    \textbf{NaN mask:} $m_{\text{nan}}\in\{0,1\}^{B\times T}$, set $x[\neg m_{\text{nan}}]\leftarrow 0$ \\[2pt]
    $\Downarrow$ \\
    \textbf{Linear projection:} $\mathrm{Linear}(D\!\rightarrow\!H)$, $h^{(0)}\in\mathbb{R}^{B\times T\times H}$ \\[2pt]
    $\Downarrow$ \\
    \textbf{Time masking (train):} sample $m\in\{0,1\}^{B\times T}$, apply $m\leftarrow m\wedge m_{\text{nan}}$, set $h^{(0)}[\neg m]\leftarrow 0$ \\[2pt]
    $\Downarrow$ \\
    \textbf{Transpose:} $B\times T\times H \rightarrow B\times H\times T$ \\[2pt]
    $\Downarrow$ \\
    \textbf{Dilated Conv Encoder:} residual ConvBlocks with dilation $2^i$ and SamePadConv (length preserved), output $\hat h^{(L)}\in\mathbb{R}^{B\times C_{\text{out}}\times T}$ \\[2pt]
    $\Downarrow$ \\
    \textbf{Dropout:} $p=0.1$ \\[2pt]
    $\Downarrow$ \\
    \textbf{Transpose back:} $B\times C_{\text{out}}\times T \rightarrow B\times T\times C_{\text{out}}$ \\[2pt]
    $\Downarrow$ \\
    \textbf{Output:} $z\in\mathbb{R}^{B\times T\times C_{\text{out}}}$, where $C_{\text{out}}=\text{output\_dims}$ \\
    \hline
    \end{tabular}
    \caption{Details of the TS2Vec encoder. }
    \label{fig:ts2vec-dataflow-simple}
\end{figure}
\section{Experimental Settings}
\label{appendix:hyperpar}
In this part, we introduce our main experimental settings. For the time series generation task, we conduct five evaluations to obtain the experimental results. Furthermore, the detailed hyperparameters of SDformer and MSDformer are summarized in Tables~\ref{tab:hyperparameters_sdformer} and~\ref{tab:hyperparameters_msdformer}, respectively. Parameter counts for both models are provided in Table~\ref{tab:modelsize}. 
\begin{table}[h!]
    \caption{Detailed hyperparameters of SDformer.}
    \centering
    \renewcommand\arraystretch{1.3}
    \begin{threeparttable}
        \begin{tabular}{c|cccccc}
                \toprule
                Dataset  & Sines & Stocks & ETTh & MuJoCo & Energy & fMRI \\
                \midrule

                $V$             & 1024 & 512 & 512 & 512 & 512 & 512 \\
                $d_c$           & 512 & 256 & 512 & 512 & 512 & 512 \\
                $\lambda$       & 0.5 & 2.0 & 0.5 & 0.5 & 0.001 & 0.01 \\
                $r$             & 4 & 4 & 4 & 4 & 4 & 2 \\
                $N_{enc}$  & 2 & 2 & 2 & 2 & 2 & 1 \\
                $N_{trans}$  & 2 & 2 & 6 & 2 & 2 & 2 \\
                $\epsilon$               & 0.3 & 0.3 & 0.3 & 0.1 & 0.1 & 0.1 \\
                \bottomrule
        \end{tabular}
        \begin{tablenotes}
            \small
            \item[$N_{enc}$:] Number of encoder layers
            \item[$N_{trans}$:] Number of transformer layers
        \end{tablenotes}
    \end{threeparttable}
    \label{tab:hyperparameters_sdformer}
\end{table}
\begin{table}[h!]
    \caption{Detailed hyperparameters of MSDformer.}
    \centering
    \tabcolsep=0.02cm
    \renewcommand\arraystretch{1.3}
    \begin{tabular}{c|cccccc}
                \toprule
                Dataset  & Sines & Stocks & ETTh & MuJoCo & Energy & fMRI \\
                \midrule

                $V^{(1:K)}$             & [512,512] & [256,512] &[128,512] & [512,512] &[512,512] & [128,512,512]\\
                $d_c$ &512 &512 &512 &512 &512 &512\\
                $\lambda$       & 1.0 & 0.5 & 0.5 &1.0 & 0.0005 & 0.01 \\
                $r^{(1:K)}$             &[2,4] & [2,4] & [4,8] & [2,4] & [2,4] & [2,4,8] \\
                $N_{enc}$  & [1,2] & [1,2] & [2,3] & [1,2] & [1,2] & [1,2,3] \\
                $N_{trans}$ & 2 & 2 & 2 & 2 & 2 & 2\\
                 $\epsilon$               & 0.1 & 0.3 & 0.1 & 0.1 & 0.1 & 0.1 \\
                \bottomrule
    \end{tabular}
    \label{tab:hyperparameters_msdformer}
\end{table}

\begin{table}[t]
\footnotesize
\centering
\caption{Parameter counts of SDformer and MSDformer on various Datasets (Unit: Million). Total parameters are calculated from precise values; minor discrepancies in summation are due to rounding.}
\label{tab:modelsize}
\renewcommand{\arraystretch}{1.3}
\resizebox{\linewidth}{!}{%
\begin{tabular}{c|ccc|ccc|ccc}
\hline
\multirow{2}{*}{Methods} & 
\multicolumn{3}{c|}{Sines} & 
\multicolumn{3}{c|}{Stocks} & 
\multicolumn{3}{c}{ETTh}  \\
\cline{2-10}
& Stage1 & Stage2 & Total & Stage1 & Stage2 & Total & Stage1 & Stage2 & Total  \\
\hline
SDformer & 18.6 & 27.3 & 45.9 & 18.6 & 25.7 & 44.4 & 18.7 & 76.6 & 95.3  \\
MSDformer &29.2 &27.3 &56.5 &29.2 &26.8 &55.9 &45.4 &26.5 &72.0 \\
\hline

\hline
\multirow{2}{*}{Methods} & 
\multicolumn{3}{c|}{MuJoCo} & 
\multicolumn{3}{c|}{Energy} & 
\multicolumn{3}{c}{fMRI} \\
\cline{2-10}
& Stage1 & Stage2 & Total & Stage1 & Stage2 & Total & Stage1 & Stage2 & Total  \\
\hline
SDformer & 18.7 & 26.3 & 44.9 & 18.7 & 26.3 & 45.0 & 10.6 & 26.3 & 36.9 \\
MSDformer &29.2 &27.3 &56.5 &29.3 &27.3 &56.6 &56.3 &27.6 &83.9
 \\
\hline
\end{tabular}%
}
\end{table}

\begin{table}[hbtp]
\centering
\caption{Performance comparison of increasing the single-scale codebook size vs. number of scales under varying length settings. }
\renewcommand\arraystretch{1.3}
\resizebox{1.0\linewidth}{!}{
\begin{tabular}{c|c|ccc}

\hline
$\tau$&$V^{(1:K)}$
&\begin{tabular}[c]{@{}c@{}}Discriminative \\ Score$\downarrow$\end{tabular} &\begin{tabular}[c]{@{}c@{}}Predictive \\ Score$\downarrow$\end{tabular} &\begin{tabular}[c]{@{}c@{}}Context-FID \\ Score$\downarrow$\end{tabular}\\ 
\hline
\multirow{3}{*}{\parbox{0.5cm}{\centering $24$}} 
&$[512]$ &0.006±.004 &0.249±.000 &0.003±.000\\
&$[1024]$ &0.005±.004 &0.249±.000 &0.003±.000\\
&$[512,512]$ &0.005±.003 &0.249±.000 &0.003±.000\\
\hline
\multirow{3}{*}{\parbox{0.5cm}{\centering $64$}} 
&$[512]$ &0.010±.007 &0.247±.001 &0.031±.002\\
&$[1024]$ &0.010±.005 &0.247±.001 &0.034±.003\\
&$[512,512]$ &0.009±.006 &0.247±.001 &0.027±.003 \\
\hline
\multirow{3}{*}{\parbox{0.5cm}{\centering $128$}} 
&$[512]$ &0.013±.000 &0.244±.000  &0.036±.002\\
&$[1024]$ &0.010±.008 &0.245±.001 &0.031±.003\\
&$[512,512]$ &0.007±.006  &0.244±.001 &0.034±.004 \\
\hline
\multirow{3}{*}{\parbox{0.5cm}{\centering $256$}} 
&$[512]$ &0.017±.003 &0.243±.002 &0.041±.003\\
&$[1024]$ &0.011±.005 &0.243±.002 &0.034±.003\\
&$[512,512]$ &0.010±.009 &0.243±.003 &0.032±.002 \\
\hline
\end{tabular}}
\label{tab:codebook_expansion2}
\end{table}

\begin{table}[th]
    \caption{
    The detailed architecture of the Autoregressive Transformer block.
    }
\newcommand{\tabincell}[2]{\begin{tabular}{@{}#1@{}}#2\end{tabular}}
 \begin{center}
 	\begin{tabular}{|c|c|c|}
 	\multicolumn{1}{c}{Layer} & \multicolumn{1}{c}{Function} & \multicolumn{1}{c}{Descriptions} \\
        \toprule
 	 1 & Layernorm & nn.LayerNorm()\\
          2 & Casual-attention & CasualAttention(q=x, k=x, v=x) \\
          3 & Layernorm & nn.LayerNorm() \\
          4 & MLP & nn.Linear() \\
          5 & ReLU & nn.ReLU()\\
          6 & MLP & nn.Linear() \\
    \bottomrule
	\end{tabular}
	 \end{center}
  \label{table:trans_ar}
\end{table}
\vspace{-0.2in}
\begin{table*}[h]
    \caption{
    The detailed architecture of the time serie tokenizer's encoder. The ellipsis indicates that layers 3 to 5 are repeated, with the number of repetitions increasing by one for each doubling of the downsampling factor of the encoder.}
    
\newcommand{\tabincell}[2]{\begin{tabular}{@{}#1@{}}#2\end{tabular}}
 \begin{center}
    \resizebox{1.0\linewidth}{!}{
 	\begin{tabular}{|c|c|c|c|c|}
 	\multicolumn{1}{c}{Layer} & \multicolumn{1}{c}{Function} & \multicolumn{1}{c}{Descriptions} & \multicolumn{1}{c}{Input Shape} & \multicolumn{1}{c}{Output Shape} \\
    \toprule
 	 1 & Convolution & input channel=$d$, output channel=D, kernel size=3, stride=1, padding=1 & (B, $d$, L) & (B, D, L) \\
        2   & ReLU & nn.ReLU() & (B, D, L)  & (B, D, L) \\
       3 & Convolution & input channel=D, output channel=D, kernel size=4, stride=2, padding=1 & (B, D, L) & (B, D, L$/$2) \\
       4 &ResNet &input channel=D, depth=3, dilation growth rate=3 & (B, D, L$/$2) & (B, D, L$/$2) \\
       $\dots$ & $\dots$ & \textcolor{gray}{Repeat layers 3-4 for (T-1) more times} & (B, D, $\text{L}/2^i$) & (B, D, $\text{L}/2^{i+1}$) \\ 
       2T+3 & Convolution & input channel=D, output channel=$d_{c}$, kernel size=3, stride=1, padding=1 & (B, D, $\text{L}/2^{T}$)  & (B, $d_{c}$, $\text{L}/2^{T}$)\\
    \bottomrule
	\end{tabular}}
	 \end{center}
  \label{tab:enc_1}
\end{table*}
\begin{table*}
    \caption{
    The detailed architecture of the time serie tokenizer's decoder. The ellipsis indicates that layers 3 to 5 are repeated, with the number of repetitions increasing by one for each doubling of the downsampling factor of the encoder.}
    
    \label{tab:Tran-Enc}
\newcommand{\tabincell}[2]{\begin{tabular}{@{}#1@{}}#2\end{tabular}}
 \begin{center}
    \resizebox{1.0\linewidth}{!}{
 	\begin{tabular}{|c|c|c|c|c|}
 	\multicolumn{1}{c}{Layer} & \multicolumn{1}{c}{Function} & \multicolumn{1}{c}{Descriptions} & \multicolumn{1}{c}{Input Shape} & \multicolumn{1}{c}{Output Shape} \\
    \toprule
 	 1 & Convolution & input channel=$d_{c}$, output channel=D, kernel size=3, stride=1, padding=1 & (B, $d_{c}$, $\text{L}/2^{T}$) & (B, D, $\text{L}/2^{T}$) \\
     2  & ReLU & nn.ReLU() & (B, D, $\text{L}/2^{T}$)  & (B, D, $\text{L}/2^{T}$) \\
     3 & ResNet & input channel=D, depth=3, dilation growth rate=3 & (B, D, $\text{L}/2^{T-i}$) & (B, D, $\text{L}/2^{T-i}$) \\ 
     4 & Upsample & nn.Upsample(scale factor=2, mode='nearest') & (B, D, $\text{L}/2^{T-i}$) & (B, D, $\text{L}/2^{T-i-1}$) \\
     5 & Convolution & input channel=D, output channel=D, kernel size=3, stride=1, padding=1 & (B, D, $\text{L}/2^{T-i-1}$) & (B, D, $\text{L}/2^{T-i-1}$) \\
     \dots & \dots & \textcolor{gray}{Repeat layers 3-5 for (T-1) more times} & \dots & \dots \\
     3T+3 & Convolution & input channel=D, output channel=D, kernel size=3, stride=1, padding=1 & (B, D, L) & (B, D, L)  \\
     3T+4 & ReLU  & nn.ReLU() & (B, D, L) & (B, D, L) \\
     3T+5 & Convolution  & input channel=D, output channel=$d$, kernel size=3, stride=1, padding=1 & (B, D, L) & (B, $d$, L) \\
    \bottomrule
	\end{tabular}}
	 \end{center}
  \label{tab:dec_1}
\end{table*}
\begin{table*}[h]
    \caption{
    The detailed architecture of the ResNet block.}
    
\newcommand{\tabincell}[2]{\begin{tabular}{@{}#1@{}}#2\end{tabular}}
 \begin{center}
    \resizebox{1.0\linewidth}{!}{
 	\begin{tabular}{|c|c|c|c|c|}
 	\multicolumn{1}{c}{Layer} & \multicolumn{1}{c}{Function} & \multicolumn{1}{c}{Descriptions} & \multicolumn{1}{c}{Input Shape} & \multicolumn{1}{c}{Output Shape} \\
    \toprule
 	 1 & LayerNorm & nn.LayerNorm() & (B, D, L) & (B, D, L) \\
     2   & ReLU & nn.ReLU() & (B, D, L)  & (B, D, L) \\
     3 & Convolution & input channel=D, output channel=D, kernel size=3, padding=dilation, dilation=dilation  & (B, D, L) & (B, D, L) \\
     4   & LayerNorm & nn.LayerNorm() & (B, D, L) & (B, D, L) \\
     5 & ReLU & nn.ReLU() & (B, D, L)  & (B, D, L) \\ 
     6 & Convolution & input channel=D, output channel=D, kernel size=1, stride=1, padding=0  & (B, D, L)  & (B, D, L)\\
     7 & Residual & output = input + conv output  & (B, D, L)  & (B, D, L) \\
    \bottomrule
	\end{tabular}}
	 \end{center}
  \label{tab:resnet_1}
\end{table*}
\begin{table*}[h!]
    \caption{
    The detailed training process of stage 2.}
    
\newcommand{\tabincell}[2]{\begin{tabular}{@{}#1@{}}#2\end{tabular}}
 \begin{center}
    \resizebox{1.0\linewidth}{!}{
 	\begin{tabular}{|c|c|c|c|c|}
 	\multicolumn{1}{c}{Process} & \multicolumn{1}{c}{Module} & \multicolumn{1}{c}{Operation} & \multicolumn{1}{c}{Input Shape} & \multicolumn{1}{c}{Output Shape} \\
    \toprule
 	 Input & --- & Codebook indices inputs & (B, $\text{P}_{\text{sum}}$) & --- \\
     Add [BOS] token & prepend & Add start token &  (B, $\text{P}_{\text{sum}}$)  &  (B, 1+$\text{P}_{\text{sum}}$) \\
     Token Emb & tok\_embed  & Embedding lookup   & (B, 1+$\text{P}_{\text{sum}}$) & (B, 1+$\text{P}_{\text{sum}}$, $d_{c}$) \\
     Pos Emb  & pos\_embed  & Add position encoding & (B, 1+$\text{P}_{\text{sum}}$, $d_{c}$) & (B, 1+$\text{P}_{\text{sum}}$, $d_{c}$) \\
     Type Emb & type\_emb  & Add type encoding & (B, 1+$\text{P}_{\text{sum}}$, $d_{c}$)  & (B, 1+$\text{P}_{\text{sum}}$, $d_{c}$) \\ 
     Transformer Blocks & block $\times$ N & Transformer layers  & (B, 1+$\text{P}_{\text{sum}}$, $d_{c}$)  & (B, 1+$\text{P}_{\text{sum}}$, $d_{c}$)\\
     Linear Head & head projection & Project to vocabulary  & (B, 1+$\text{P}_{\text{sum}}$, $d_{c}$)  & (B, 1+$\text{P}_{\text{sum}}$, $\text{C}_{\text{sum}}$) \\
     Outputs & --- & Logits for next token & (B, 1+$\text{P}_{\text{sum}}$, $\text{C}_{\text{sum}}$) & ---\\
    \bottomrule
	\end{tabular}}
	 \end{center}
  \label{tab:stage2_train}
\end{table*}
\section{Baselines}
\label{appendix:baselines}
In this part, we briefly introduce the baselines:

\textbf{TimeGAN} \cite{yoon2019time} is a method that combines Generative Adversarial Networks (GANs) with sequence modeling. It jointly optimizes supervised and unsupervised adversarial losses to learn both the static features and dynamic dependencies of time series data, thereby generating high-quality synthetic time series.

\textbf{COT-GAN} \cite{xu2020cot} is a GAN-based method that incorporates causal optimal transport theory. It optimizes the distribution matching between generated and real data to produce high-quality synthetic time series.

\textbf{TimeVAE} \cite{desai2021timevae} is a time series generation method based on Variational Autoencoders (VAEs). It enhances model interpretability by incorporating domain-specific temporal patterns (such as polynomial trends and seasonality) and optimizes training efficiency to accurately represent and efficiently synthesize the temporal attributes of the original data.

\textbf{DiffTime} \cite{coletta2024constrained} is a time series generation method based on a constrained optimization framework. It combines conditional diffusion models to efficiently generate synthetic time series that meet specific constraints.

\textbf{DiffWave} \cite{kong2020diffwave} is an audio synthesis method based on diffusion probabilistic models. The model combines a non-autoregressive structure with bidirectional dilated convolutions to generate high-quality audio in parallel, offering significant advantages in synthesis speed. It also ensures the quality of the generated audio by progressively denoising white noise to produce the target audio waveform.

\textbf{Diffusion-TS} \cite{yuan2024diffusion} is a method that uses an encoder-decoder Transformer architecture to decouple the seasonality and trend of time series. It leverages diffusion models for high-precision multivariate time series generation. Additionally, the algorithm directly reconstructs samples rather than noise and uses Fourier loss to constrain periodic characteristics, significantly improving the quality and interpretability of the generated time series.

\section{Additional experimental results}
\label{appendix:ab}
In this part, we additionally provide an experimental performance comparison between increasing codebook size and number of scales under different length settings on the Energy dataset, as shown in Table \ref{tab:codebook_expansion2}.
The results align with the conclusions drawn in the main text: using SDformer with a codebook size of \( V = 512 \) as the baseline, increasing the codebook size to \( V = 1024 \) leads to performance improvements. However, the performance enhancement is more pronounced and stable when increasing the number of scales (e.g., \( V^{(1:K)} = [512, 512] \), i.e., MSDformer). This further demonstrates the advantages of multi-scale modeling.

\section{Algorithms}
\label{appendix:alg}
In this section, we summarize the complete training and inference procedures of MSDformer, which follows a two-stage pipeline and are shown in the main manuscript. Stage 1 trains a multi-scale time-series tokenizer to discretize continuous sequences into compact code indices while preserving reconstruction fidelity, providing the discrete representation on which the rest of the framework is built. Stage 2 trains a multi-scale autoregressive Transformer to model the joint distribution of the resulting multi-scale token sequences, which enables unconditional sampling of new tokens and, through the tokenizer's decoder, the generation of novel time-series samples. The detailed optimization steps for Stage 1 and Stage 2 are summarized in Algorithms \ref{alg:vqvae} and \ref{alg:train_ms_ar}, respectively, while the overall inference pipeline is given in Algorithm \ref{alg:msdformer}. SDformer is a special case of this framework obtained by setting the number of scales to ($K$=1).

\section{Model details} 
\label{appendix:model_details}
In this section, we present the detailed network architecture of SDformer and MSDformer.The meanings of the symbols in following tables are as follows: B is batch size, $d$ is the input channels, D and $d_{c}$ denote the hidden and codebook dimensions, respectively. L, T are the sequence length and number of downsampling/upsampling operations in our setting. Also the i in Table \ref{tab:enc_1} and \ref{tab:dec_1} represents iteration index (0 to T-1). In Stage 1, MSDformer and SDformer employ identical encoder architectures and identical decoder architectures. The only structural variation arises from differing layer counts in specific modules due to distinct downsampling factors, as documented in Tables \ref{tab:enc_1} and \ref{tab:dec_1}. Also, the ResNet block of Stage 1 is described in Table \ref{tab:resnet_1}.
In stage 2, the multi-scale autoregressive Discrete
Transformer is implemented as a standard decoder-only Transformer, with its block structure depicted in Table \ref{table:trans_ar} and the detailed training process is shown in Table \ref{tab:stage2_train}, in which the $\text{P}_{\text{sum}}$ is sum of the token length within each scale and $\text{C}_{\text{sum}}$ denotes the total codebook size of the multi-scale tokenizers. Since the T denotes the number of upsampling and downsampling operations. The total layers of Encoder include 2 initial layers (Conv + ReLU), 2*T repeated layers (Conv + ResNet) and 1 final layer (Conv). For the decoder structure, it consists of 2 initial layers (Conv + ReLU), 3*T repeated layers (ResNet + Upsample + Conv) and 3 final layers (Conv + ReLU + Conv).

Finally, the multi-scale codebook sizes $V^{(1:K)}$ used by MSDformer are selected empirically per dataset. Since different datasets exhibit different dimensionality, which requires different effective token rate budgets for a favorable rate–distortion trade-off. As future work, we plan to explore the relationship between the optimal token budget (e.g., effective rate ($\sum_k L^{(k)}\log_2 V^{(k)})$) and dataset dimensionality, with the goal of deriving an automatic and theoretically grounded configuration strategy for $V^{(1:K)}$. 

\clearpage
\begin{figure*}
\centering

\raisebox{0.2\height}{\rotatebox{90}{\small MSDformer}} 
\subfloat{\includegraphics[width=0.16\textwidth]{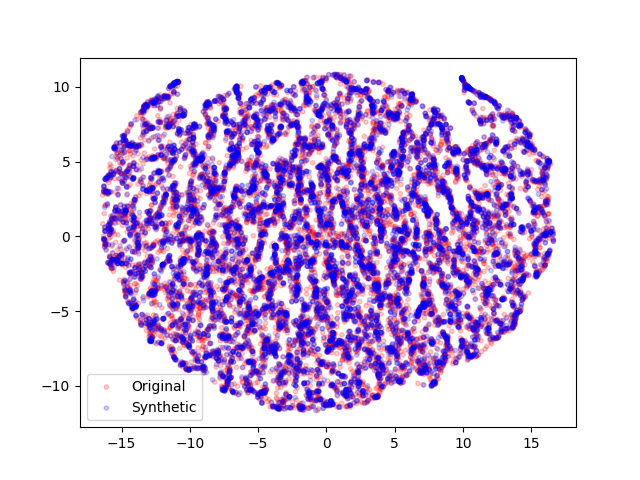}}
\hfill
\subfloat{\includegraphics[width=0.16\textwidth]{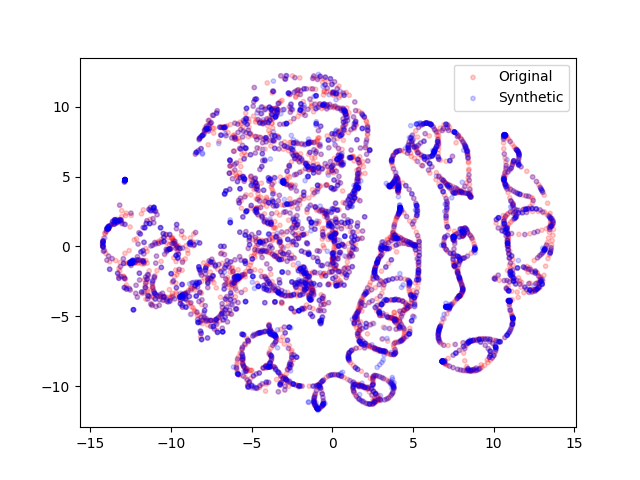}}
\hfill
\subfloat{\includegraphics[width=0.16\textwidth]{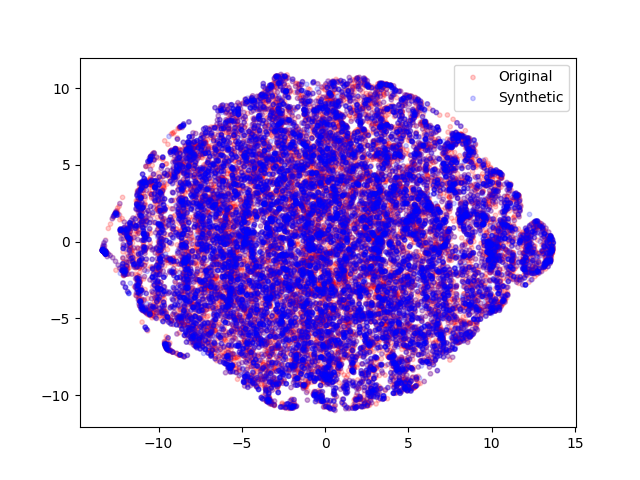}}
\hfill
\subfloat{\includegraphics[width=0.16\textwidth]{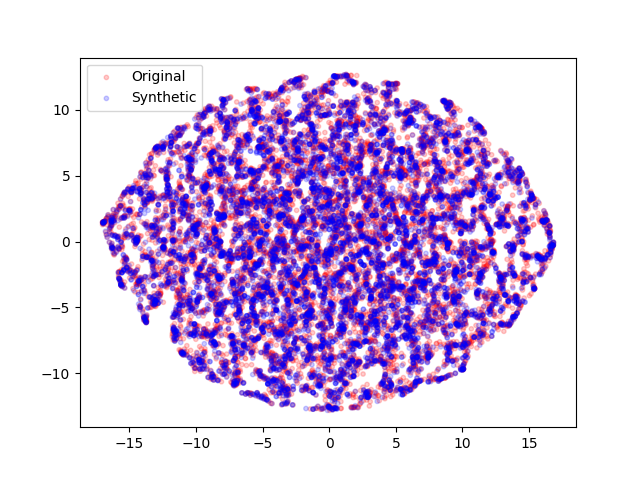}}
\hfill
\subfloat{\includegraphics[width=0.16\textwidth]{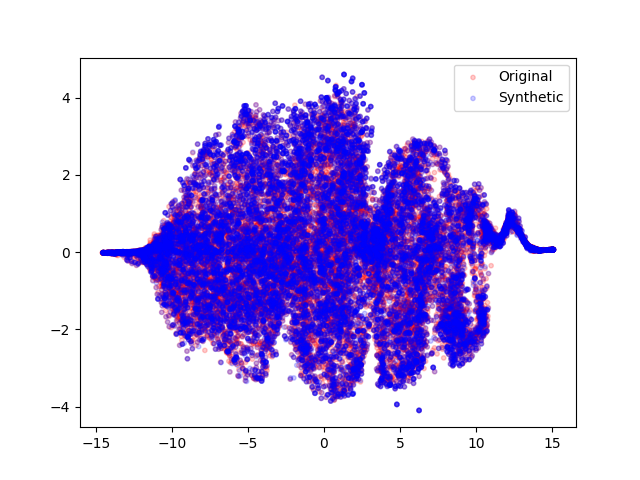}}
\hfill
\subfloat{\includegraphics[width=0.16\textwidth]{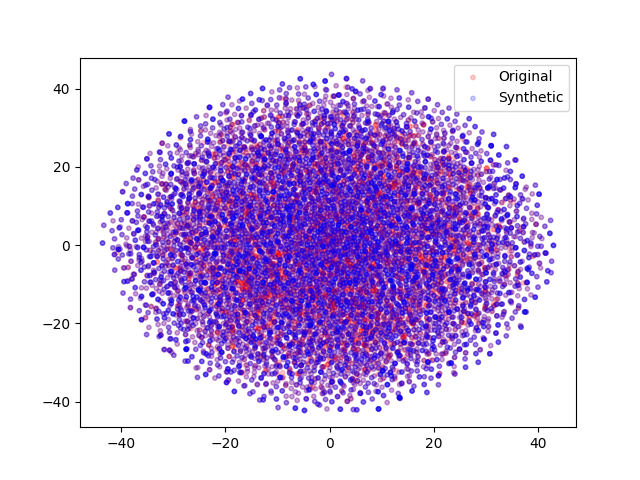}}

\vspace{-4mm}
\par
\raisebox{0.4\height}{\rotatebox{90}{\small SDformer}} 
\subfloat{\includegraphics[width=0.16\textwidth]{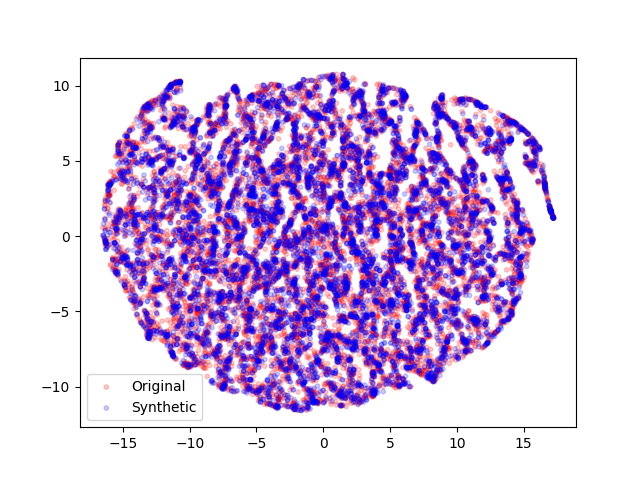}}
\hfill
\subfloat{\includegraphics[width=0.16\textwidth]{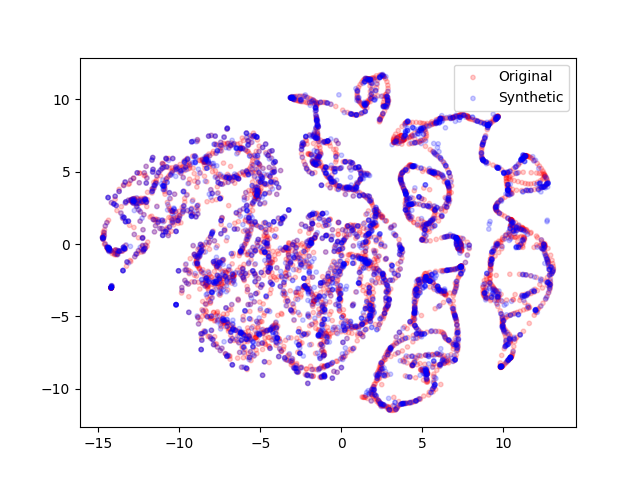}}
\hfill
\subfloat{\includegraphics[width=0.16\textwidth]{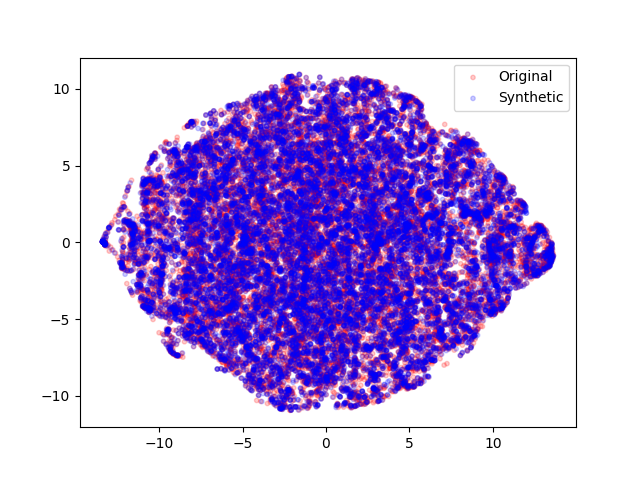}}
\hfill
\subfloat{\includegraphics[width=0.16\textwidth]{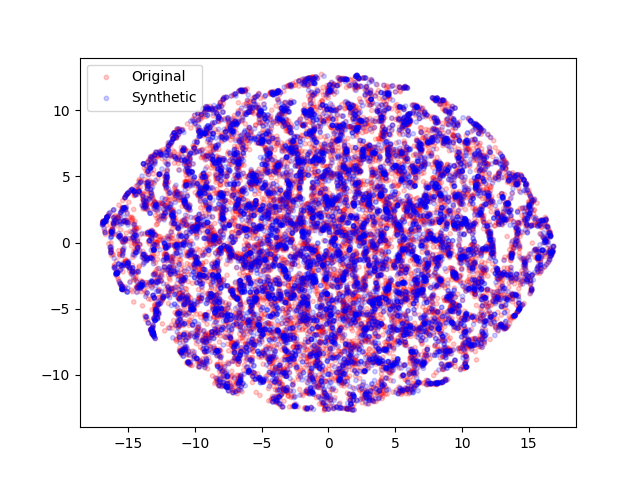}}
\hfill
\subfloat{\includegraphics[width=0.16\textwidth]{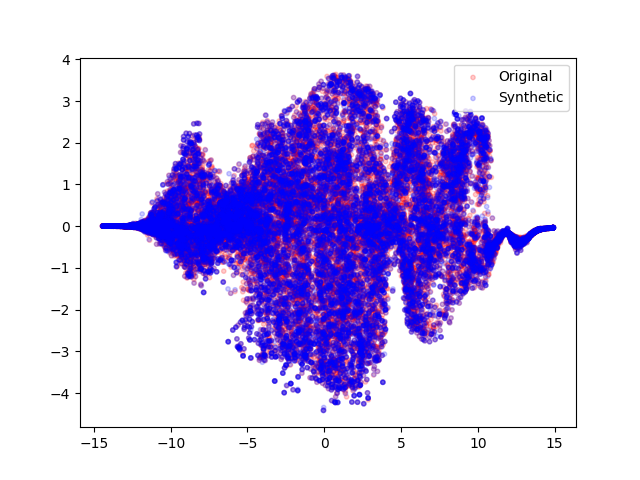}}
\hfill
\subfloat{\includegraphics[width=0.16\textwidth]{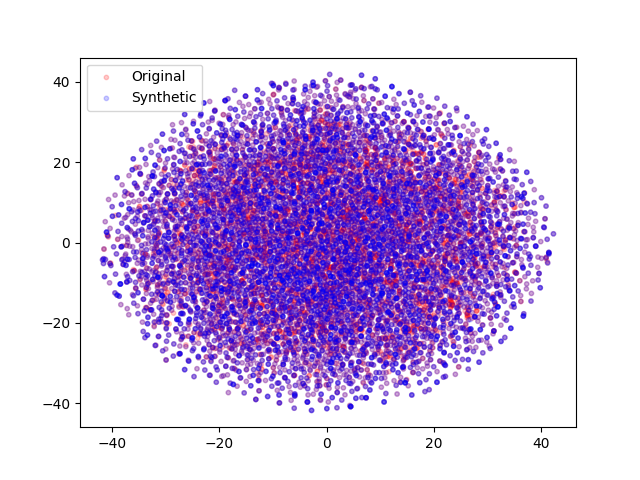}}

\vspace{-4mm}
\par
\setcounter{subfigure}{0}
\raisebox{0.1\height}{\rotatebox{90}{\small Diffusion-TS}} 
\subfloat[Sines]{\includegraphics[width=0.16\textwidth]{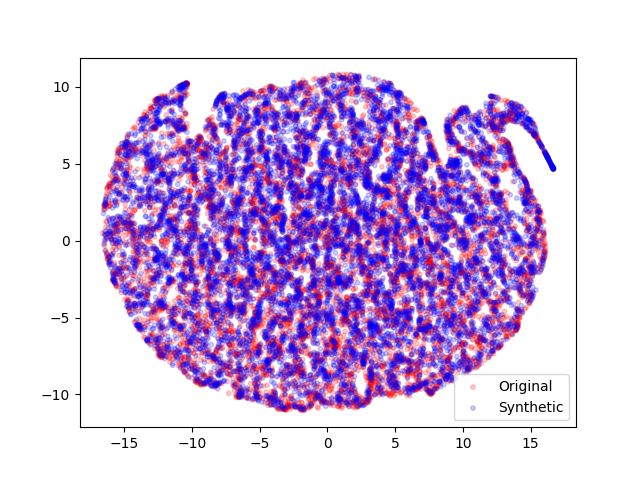}}
\hfill
\subfloat[Stocks]{\includegraphics[width=0.16\textwidth]{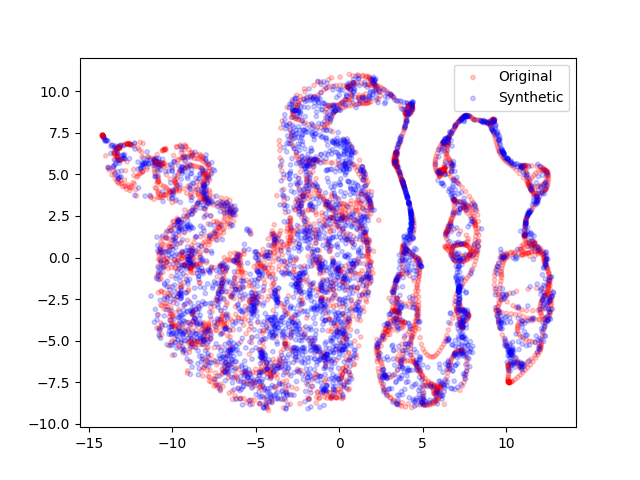}}
\hfill
\subfloat[ETTh]{\includegraphics[width=0.16\textwidth]{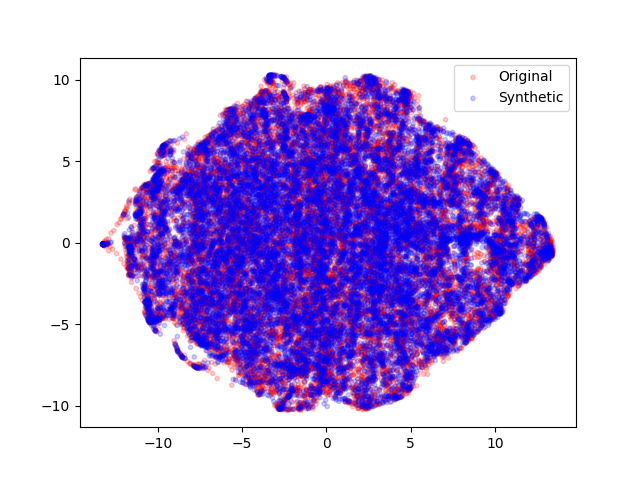}}
\hfill
\subfloat[MuJoCo]{\includegraphics[width=0.16\textwidth]{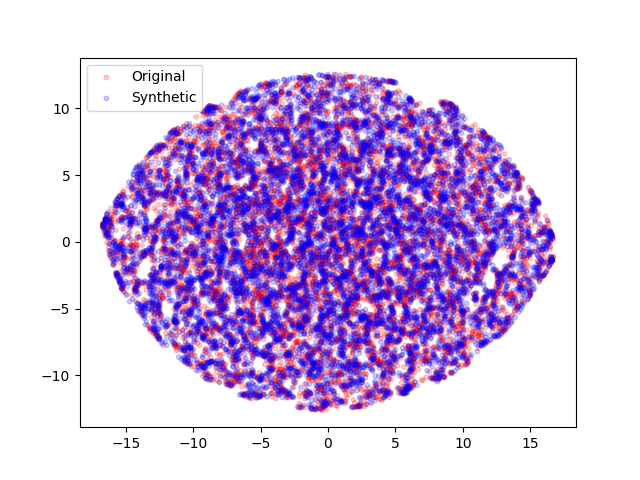}}
\hfill
\subfloat[Energy]{\includegraphics[width=0.16\textwidth]{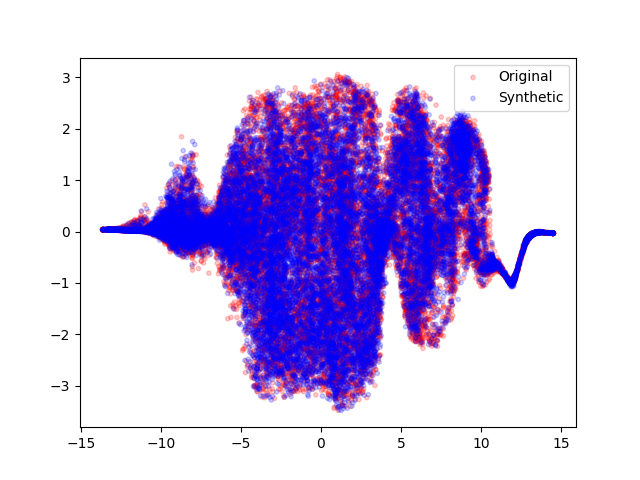}}
\hfill
\subfloat[fMRI]{\includegraphics[width=0.16\textwidth]{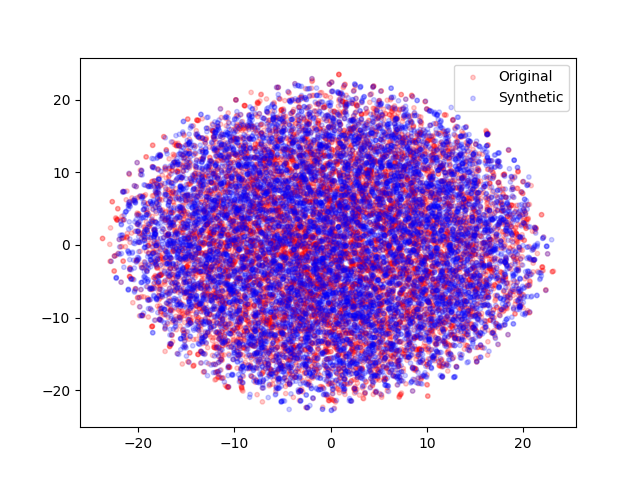}}

\caption{t-SNE visualizations of the time series synthesized by MSDformer, SDformer and Diffusion-TS.}
\label{fig:vis_tsne}

\end{figure*}
\vspace{-0.4mm}
\begin{figure*}
\centering

\raisebox{0.2\height}{\rotatebox{90}{\small MSDformer}} 
\subfloat{\includegraphics[width=0.16\textwidth]{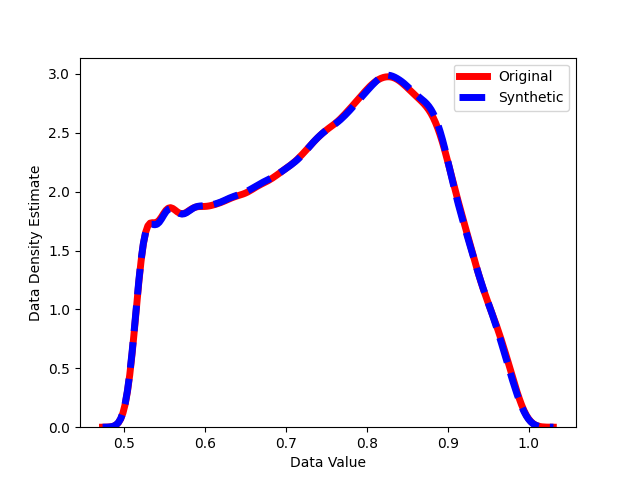}}
\hfill
\subfloat{\includegraphics[width=0.16\textwidth]{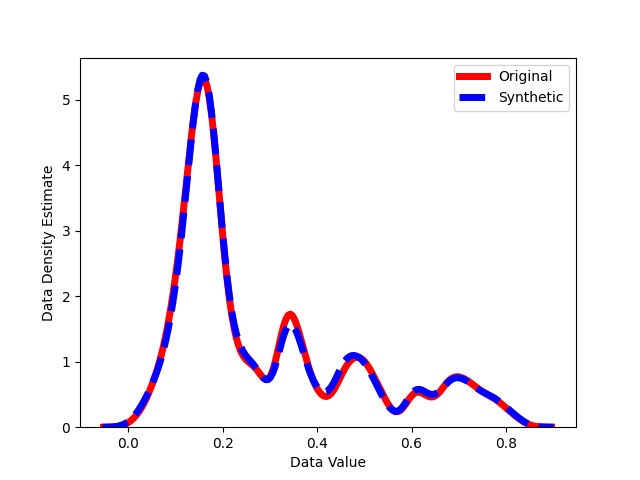}}
\hfill
\subfloat{\includegraphics[width=0.16\textwidth]{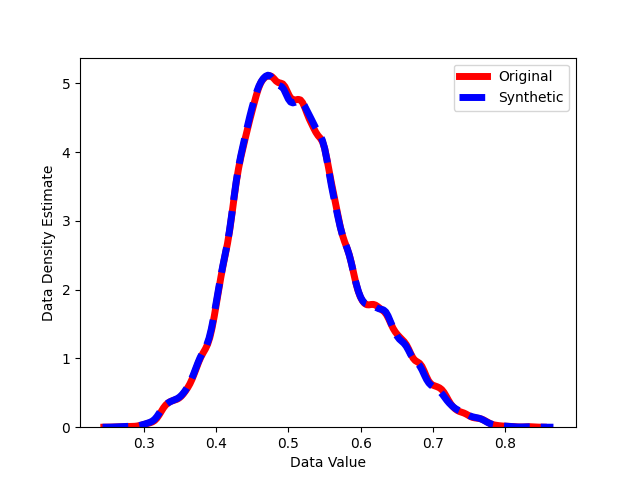}}
\hfill
\subfloat{\includegraphics[width=0.16\textwidth]{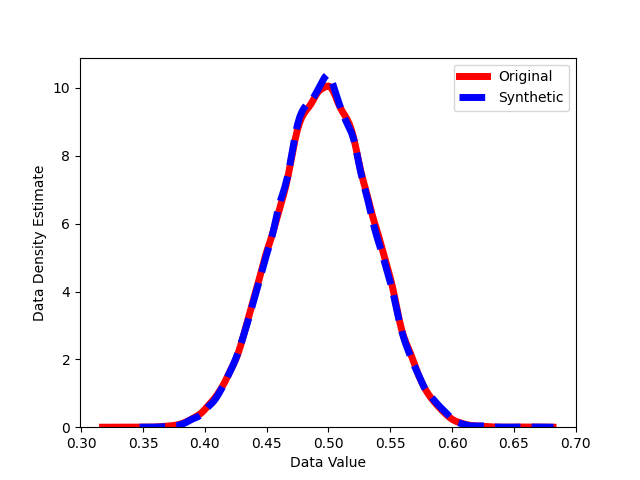}}
\hfill
\subfloat{\includegraphics[width=0.16\textwidth]{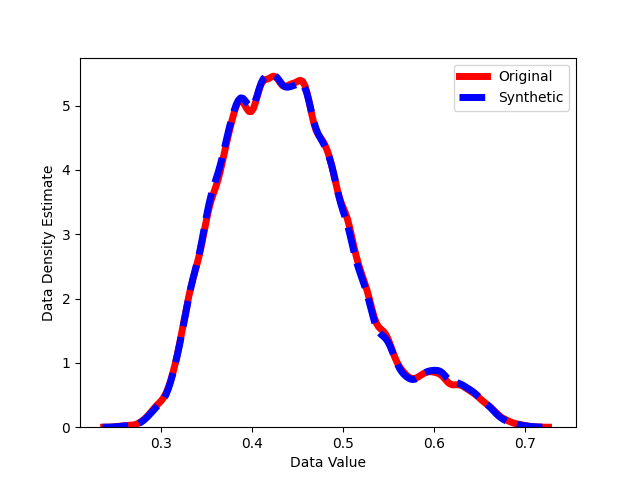}}
\hfill
\subfloat{\includegraphics[width=0.16\textwidth]{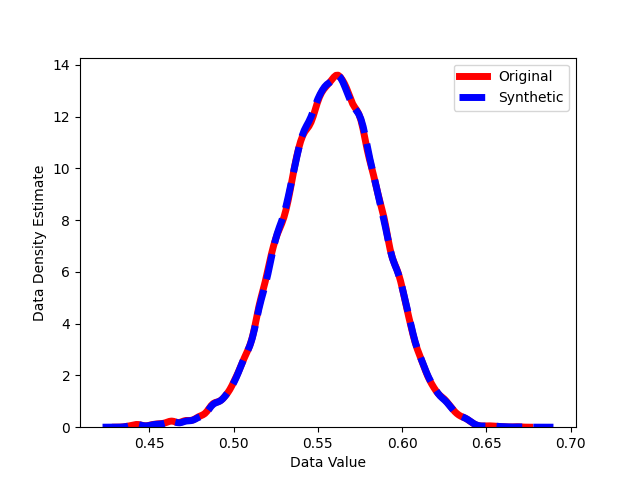}}

\vspace{-4mm}
\par
\raisebox{0.4\height}{\rotatebox{90}{\small SDformer}} 
\subfloat{\includegraphics[width=0.16\textwidth]{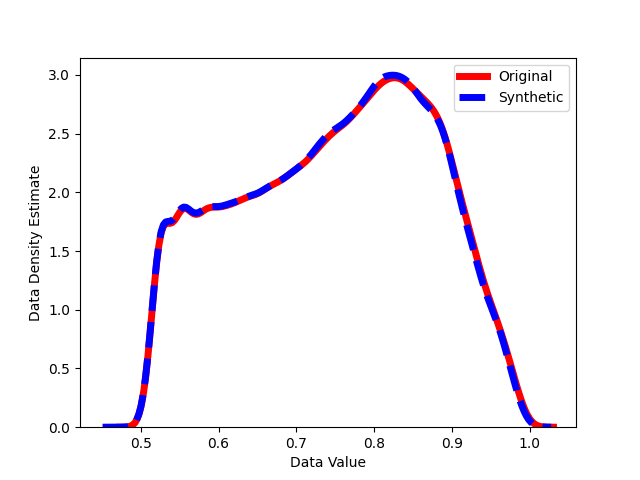}}
\hfill
\subfloat{\includegraphics[width=0.16\textwidth]{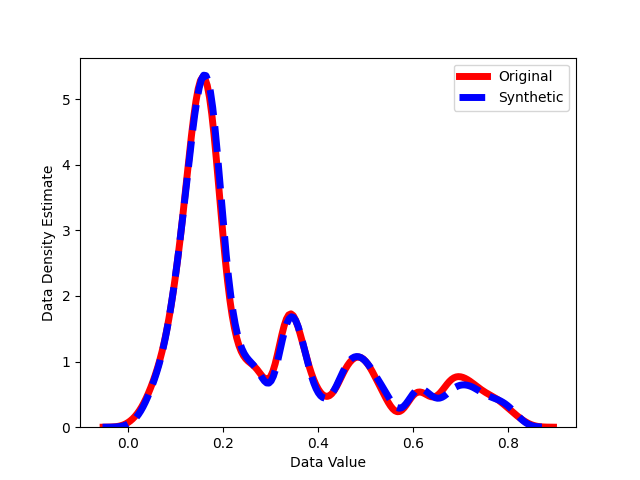}}
\hfill
\subfloat{\includegraphics[width=0.16\textwidth]{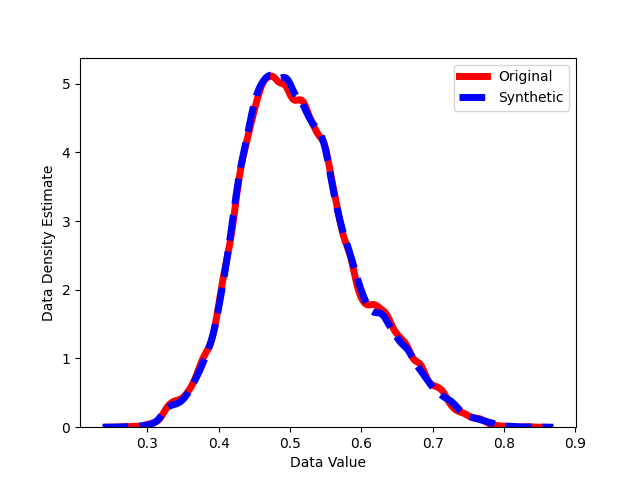}}
\hfill
\subfloat{\includegraphics[width=0.16\textwidth]{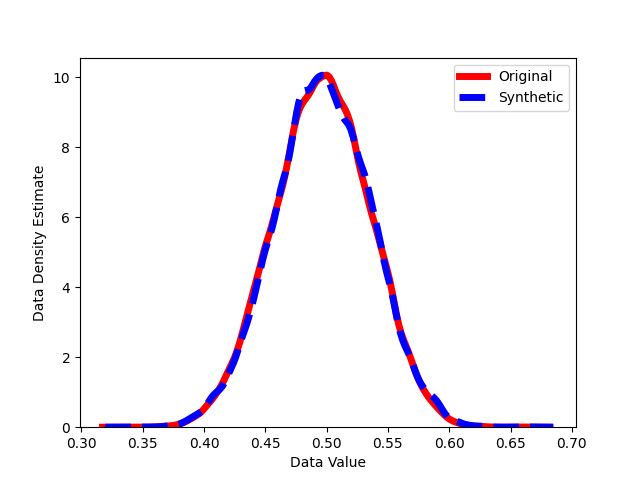}}
\hfill
\subfloat{\includegraphics[width=0.16\textwidth]{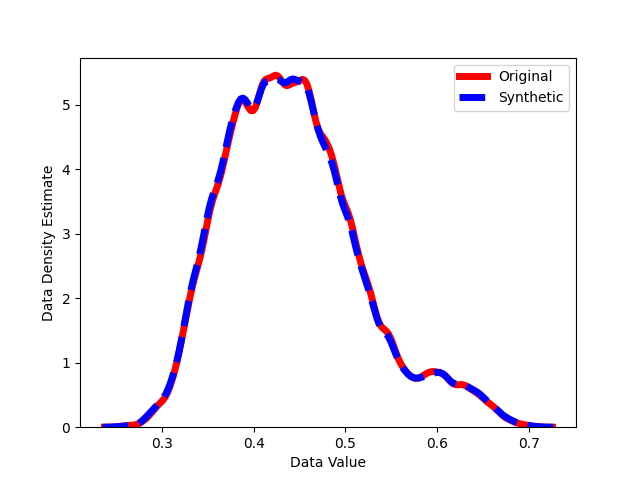}}
\hfill
\subfloat{\includegraphics[width=0.16\textwidth]{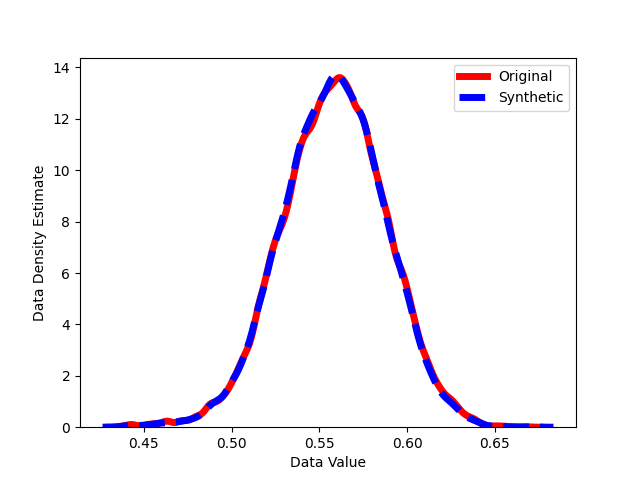}}

\vspace{-4mm}
\par
\setcounter{subfigure}{0}
\raisebox{0.1\height}{\rotatebox{90}{\small Diffusion-TS}} 
\subfloat[Sines]{\includegraphics[width=0.16\textwidth]{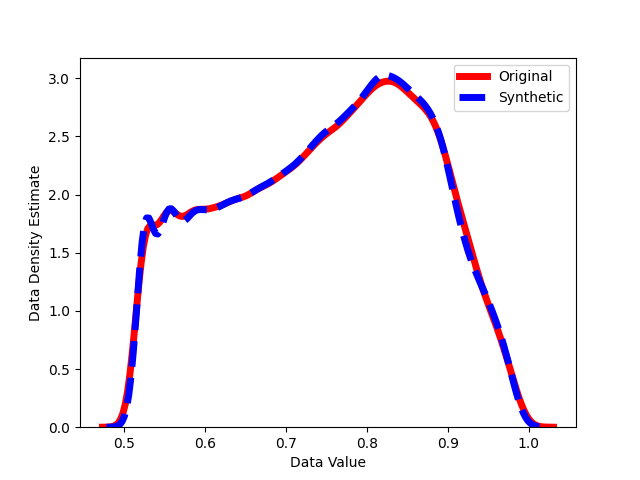}}
\hfill
\subfloat[Stocks]{\includegraphics[width=0.16\textwidth]{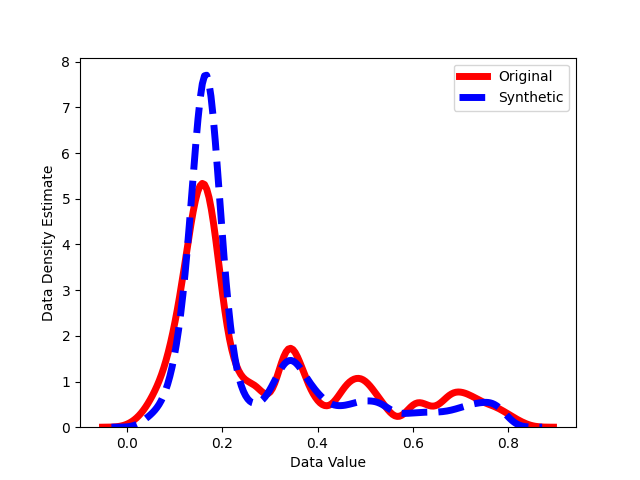}}
\hfill
\subfloat[ETTh]{\includegraphics[width=0.16\textwidth]{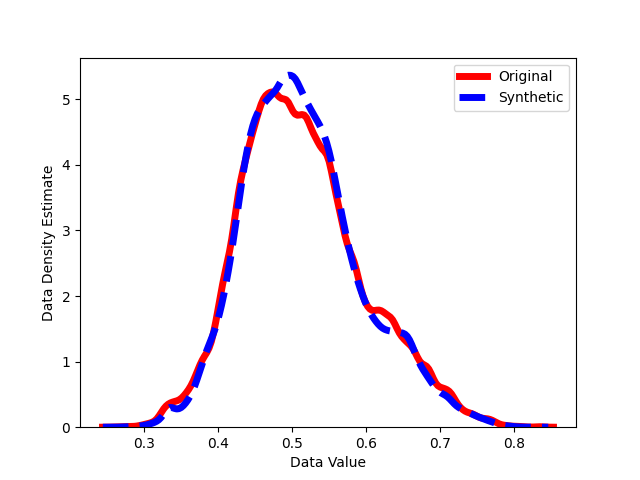}}
\hfill
\subfloat[MuJoCo]{\includegraphics[width=0.16\textwidth]{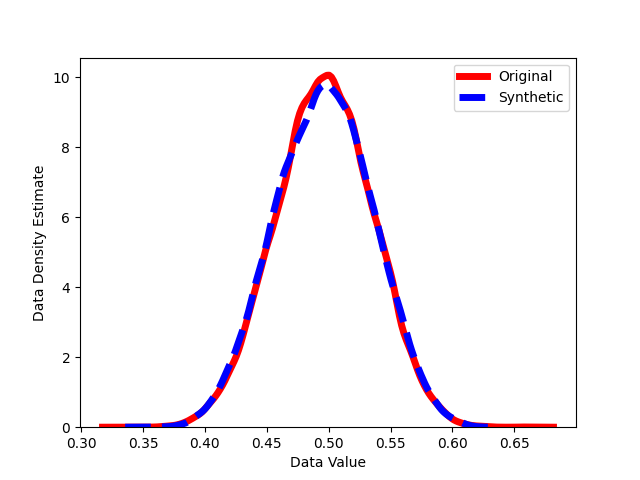}}
\hfill
\subfloat[Energy]{\includegraphics[width=0.16\textwidth]{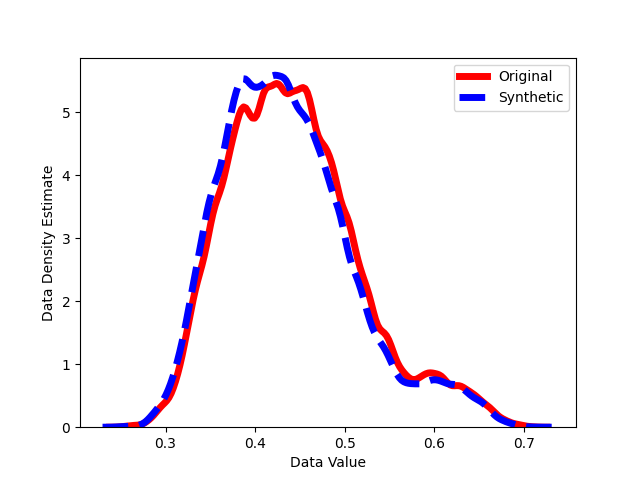}}
\hfill
\subfloat[fMRI]{\includegraphics[width=0.16\textwidth]{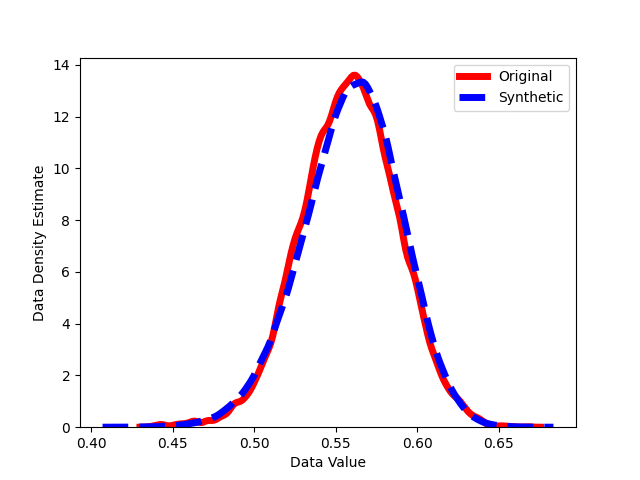}}

\caption{Kernel density estimation visualizations of the time series synthesized by MSDformer, SDformer and Diffusion-TS.}
  \label{fig:vis_kernel567}
\end{figure*}

\end{document}